\documentclass{article}

% if you need to pass options to natbib, use, e.g.:
% \PassOptionsToPackage{numbers, compress}{natbib}
% before loading nips_2018

% ready for submission
% \usepackage{nips_2018}

% to compile a preprint version, e.g., for submission to arXiv, add
% add the [preprint] option:
\usepackage[final, nonatbib]{nips_2018}
\usepackage[square,sort,comma,numbers]{natbib} 
% to compile a camera-ready version, add the [final] option, e.g.:
% \usepackage[final]{nips_2018}

% to avoid loading the natbib package, add option nonatbib:
% \usepackage[nonatbib]{nips_2018}

\usepackage[utf8]{inputenc} % allow utf-8 input
\usepackage[T1]{fontenc}    % use 8-bit T1 fonts
\usepackage{hyperref}       % hyperlinks
\usepackage{url}            % simple URL typesetting
\usepackage{booktabs}       % professional-quality tables
\usepackage{amsmath}       % blackboard math symbols
\usepackage{amsthm}

\usepackage{amsfonts}       % blackboard math symbols
\usepackage{nicefrac}       % compact symbols for 1/2, etc.
\usepackage{microtype}      % microtypography
\usepackage{longtable}
\usepackage[toc,page]{appendix}

\usepackage{graphicx}
\usepackage{subfig}
\usepackage{floatrow}
\usepackage{enumitem}

\usepackage{algorithm}
\usepackage{algorithmic}
\usepackage{float}
\usepackage{color}

\newcommand{\pb}{\boldsymbol{p}}

\newcommand{\tb}{\boldsymbol{t}}

\newcommand\eqdef{\mathrel{\stackrel{\makebox[0pt]{\mbox{\normalfont\tiny def}}}{=}}}

\DeclareMathOperator*{\argmin}{arg\,min}

\newtheorem{mydef}{Definition}
\newtheorem{lemma}{Lemma}

\newtheorem{proposition}{Proposition}

\title{Autowarp: Learning a Warping Distance from Unlabeled Time Series Using Sequence Autoencoders}

% The \author macro works with any number of authors. There are two
% commands used to separate the names and addresses of multiple
% authors: \And and \AND.
%
% Using \And between authors leaves it to LaTeX to determine where to
% break the lines. Using \AND forces a line break at that point. So,
% if LaTeX puts 3 of 4 authors names on the first line, and the last
% on the second line, try using \AND instead of \And before the third
% author name.

\author{
  Abubakar Abid\\
  Stanford University\\
  \texttt{a12d@stanford.edu} \\
  %% examples of more authors
  \And
  James Zou\\
  Stanford University\\
  \texttt{jamesz@stanford.edu} \\
  %% \AND
  %% Coauthor \\
  %% Affiliation \\
  %% Address \\
  %% \texttt{email} \\
  %% \And
  %% Coauthor \\
  %% Affiliation \\
  %% Address \\
  %% \texttt{email} \\
  %% \And
  %% Coauthor \\
  %% Affiliation \\
  %% Address \\
  %% \texttt{email} \\
}

\begin{document}
% \nipsfinalcopy is no longer used

\maketitle

\begin{abstract}
Measuring similarities between unlabeled time series trajectories is an important problem in domains as diverse as medicine, astronomy, finance, and computer vision. It is often unclear what is the appropriate metric to use because of the complex nature of noise in the trajectories (e.g. different sampling rates or outliers). Domain experts typically hand-craft or manually select a specific metric, such as dynamic time warping (DTW), to apply on their data. In this paper, we propose Autowarp, an end-to-end algorithm that optimizes and learns a good metric given unlabeled trajectories. We define a flexible and differentiable family of warping metrics, which encompasses common metrics such as DTW, Euclidean, and edit distance. Autowarp then leverages the representation power of sequence autoencoders to optimize for a member of this warping distance family. The output is a metric which is easy to interpret and can be robustly learned from relatively few trajectories. In systematic experiments across different domains, we show that Autowarp often outperforms hand-crafted trajectory similarity metrics.  

\end{abstract} 

\section{Introduction}
A \textit{time series}, also known as a \textit{trajectory}, is a sequence of observed data $\tb = (t_1, t_2, ... t_n)$ measured over time. A large number of real world data in medicine \citep{keogh2006finding}, finance \citep{tsay2005analysis}, astronomy \citep{scargle1982studies} and computer vision  \citep{lin1995fast} are time series. A key question that is often asked about time series data is: "How similar are two given trajectories?" A notion of trajectory similarity allows one to do unsupervised learning, such as clustering and visualization, of time series data, as well as supervised learning, such as classification \citep{xing2003distance}. However, measuring the distance between trajectories is complex, because of the temporal correlation between data in a time series and the complex nature of the noise that may be present in the data (e.g. different sampling rates) \citep{yin2014generalized}.

In the literature, many methods have been proposed to measure the similarity between trajectories. In the simplest case, when trajectories are all sampled at the same frequency and are of equal length, Euclidean distance can be used \citep{besse2015review}. When comparing trajectories with different sampling rates, dynamic time warping (DTW) is a popular choice \citep{besse2015review}. Because the choice of distance metric can have a significant effect on downstream analysis \citep{xing2003distance, yin2014generalized, wang2013effectiveness}, a plethora of other distances have been hand-crafted based on the specific characteristics of the data and noise present in the time series.

However, a review of five of the most popular trajectory distances found that no one trajectory distance is more robust than the others to all of the different kinds of noise that are commonly present in time series data \citep{wang2013effectiveness}. As a result, it is perhaps not surprising that many distances have been manually designed for different time series domains and datasets. In this work, we propose an alternative to hand-crafting a distance: we develop an end-to-end framework to \textit{learn} a good similarity metric directly from unlabeled time series data. 

While data-dependent analysis of time-series is commonly performed in the context of supervised learning (e.g. using RNNs or convolutional networks to classify trajectories \citep{wang2017time}), this is not often performed in the case when the time series are unlabeled, as it is more challenging to determine notions of similarity in the absence of labels.
Yet the unsupervised regime is critical, because in many time series datasets, ground-truth labels are difficult to determine, and yet the notion of similarity plays a key role. For example, consider a set of disease trajectories recorded in a large electronic health records database: we have the time series information of the diseases contracted by a patient, and it may be important to determine which patient in our dataset is most similar to another patient based on his or her disease trajectory. Yet, the choice of ground-truth labels is ambiguous in this case. In this work, we develop an easy-to-use method to determine a distance that is appropriate for a given set of unlabeled trajectories. 

In this paper, we restrict ourselves to the family of trajectory distances known as \textit{warping distances} (formally defined in Section \ref{subsection:warping}). This is for several reasons: warping distances have been widely studied, and are intuitive and interpretable \citep{besse2015review}; they are also efficient to compute, and numerous heuristics have been developed to allow nearest-neighbor queries on datasets with as many as trillions of trajectories \citep{rakthanmanon2012searching}. Thirdly, although they are a flexible and general class, warping distances are particularly well-suited to trajectories, and serve as a means of regularizing the unsupervised learning of similarity metrics directly from trajectory data. We show through systematic experiments that learning an appropriate warping distance can provide insight into the nature of the time series data, and can be used to cluster, query, or visualize the data effectively.

\paragraph*{Related Work} The development of distance metrics for time series stretches at least as far back as the introduction of dynamic time warping (DTW) for speech recognition \citep{sakoe1978dynamic}. Limitations of DTW led to the development and adoption of the Edit Distance on Real Sequence (EDR) \citep{chen2005robust}, the Edit Distance with Real Penalty (ERP) \citep{chen2004marriage}, and the Longest Common Subsequence (LCSS) \citep{kearney1990stream} as alternative distances. Many variants of these distances have been proposed, based on characteristics specific to certain domains and datasets, such as the Symmetric Segment-Path Distance (SSPD) \citep{besse2015review} for GPS trajectories, Subsequence Matching \citep{goyal2018clinically} for medical time series data, among others \citep{marteau2009time}. 

\begin{figure}[t]
\centering
\includegraphics[width=0.95\linewidth]{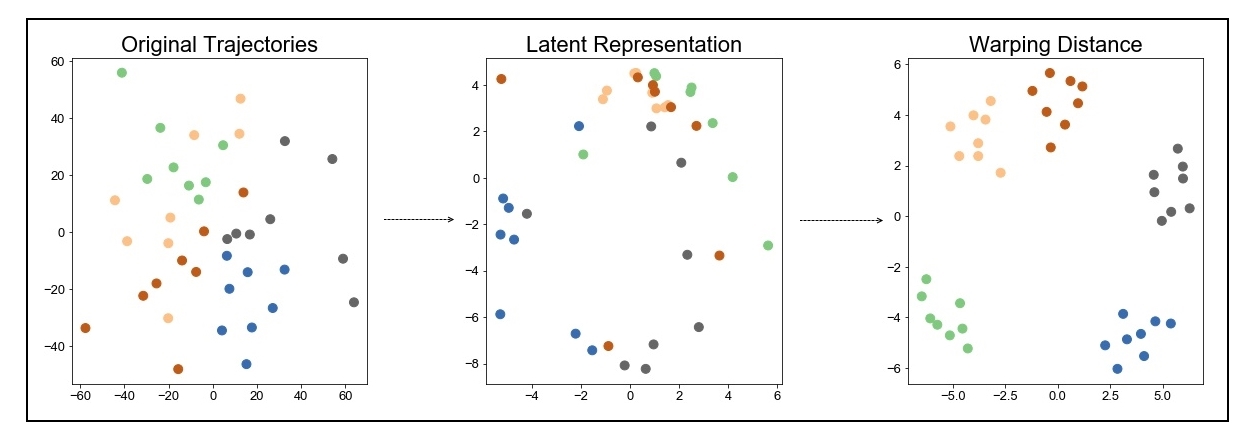}
\caption{\textbf{Learning a Distance with Autowarp.} Here we visualize the stages of Autowarp by using multi-dimensional scaling (MDS) to embed a set of 50 trajectories into two dimensions at each step of the algorithm. Each dot represents one observed trajectory that is generated by adding Gaussian noise and outliers to 10 copies of 5 seed trajectories (each color represents a seed). (left) First, we run MDS  on the original trajectories with Euclidean distance. (center) Next, we run MDS on the latent representations learned with a sequence-to-sequence autoencoder, which partially resolves the original clusters. (right) Finally, we run MDS on the original trajectories using the learned warping distance, which completely resolves the original clusters.}
\label{fig:mds}
\end{figure}

Prior work in metric learning from trajectories is generally limited to the supervised regime. For example, in recent years, convolutional neural networks \citep{wang2017time}, recurrent neural networks (RNNs) \citep{lipton2015learning}, and siamese recurrent neural networks \citep{pei2016modeling} have been proposed to classify neural networks based on labeled training sets. There has also been some work in applying unsupervised deep learning learning to time series \citep{langkvist2014review}. For example, the authors of \citep{malhotra2017timenet} use a pre-trained RNN to extract features from time-series that are useful for downstream classification. Unsupervised RNNs have also found use in anomaly detection \citep{shipmon2017time} and forecasting \citep{romeu2015stacked} of time series.

\section{The Autowarp Approach}

Our approach, which we call Autowarp, consists of two steps. First, we learn a latent representation for each trajectory using a sequence-to-sequence autoencoder. This representation takes advantage of the temporal correlations present in time series data to learn a low-dimensional representation of each trajectory. In the second stage, we search a family of warping distances to identify the warping distance that, when applied to the original trajectories, is most similar to the Euclidean distances between the latent representations. Fig. \ref{fig:mds} shows the application of Autowarp to synthetic data.

\paragraph{Learning a latent representation}
Autowarp first learns a latent representation that captures the significant properties of trajectories in an unsupervised manner. In many domains, an effective latent representation can be learned by using autoencoders that reconstruct the input data from a low-dimensional representation. We use the same approach using sequence-to-sequence autoencoders. 

This approach is inspired by similar sequence-to-sequence autoencoders, which have been successfully applied to sentiment classification \citep{dai2015semi}, machine translation \citep{sutskever2014sequence}, and learning representations of videos \citep{srivastava2015unsupervised}. In the architecture that we use (illustrated in Fig. \ref{fig:autoencoder}), we feed each step in the trajectory sequentially into an \textit{encoding} LSTM layer. The hidden state of the final LSTM cell is then fed identically into a \textit{decoding} LSTM layer, which contains as many cells as the length of the original trajectory. This layer attempts to reconstruct each trajectory based solely on the learned latent representation for that trajectory. 

What kind of features are learned in the latent representation? Generally, the hidden representation captures overarching features of the trajectory, while learning to ignore outliers and sampling rate. We illustrate this in Fig. \ref{fig:autoencoder_results} in Appendix \ref{app:figures}: the LSTM autoencoders learn to denoise representations of trajectories that have been sampled at different rates, or in which outliers have been introduced. 

\floatbox[{\capbeside\thisfloatsetup{capbesideposition={right,center},capbesidewidth=5cm}}]{figure}[\FBwidth]
{\caption{\textbf{Schematic for LSTM Sequence-Sequence Autoencoder.} We learn a latent representation for each trajectory by passing it through a sequence-to-sequence autoencoder that is trained to minimize the reconstruction loss $\left\Vert \tb - \tilde{\tb} \right\Vert^2$ between the original trajectory $\tb$ and decoded trajectory $\tilde{\tb}$. In the decoding stage, the latent representation $h$ is passed as input into each LSTM cell. \label{fig:autoencoder}}}
{
\includegraphics[width=0.6\textwidth]{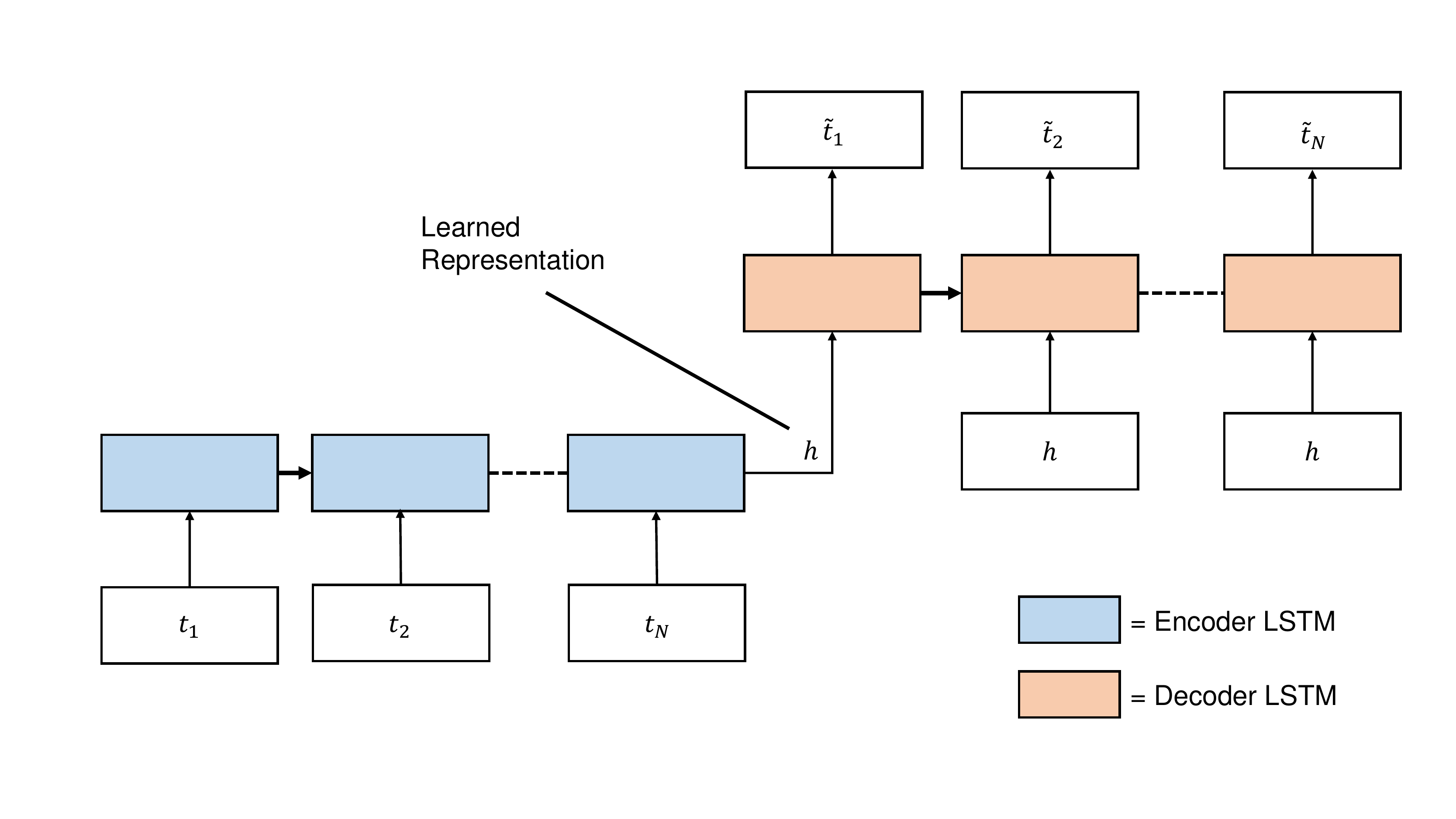}
}
\paragraph{Warping distances}
\label{subsection:warping}
Once a latent representation is learned, we search from a family of warping distances to find the warping distance across the original trajectories that mimics the distances between each trajectory's latent representations. This can be seen as ``distilling" the representation learned by the neural network into a warping distance (e.g. see \citep{hinton2015distilling}). In addition, as warping distances are generally well-suited to trajectories, this serves to regularize the process of distance metric learning, and generally produces better distances than using the latent representations directly (as illustrated in Fig. 1). 

We proceed to formally define a \textit{warping distance}, as well as the family of warping distances that we work with for the rest of the paper. First, we define a \textit{warping path} between two trajectories.

\begin{mydef}A \textbf{warping path} $\pb = (p_0, \ldots p_L)$ between two trajectories $\tb^A = (t^A_1, \ldots t^A_n)$ and $\tb^B = (t^B_1, \ldots t^B_m)$ is a sequence of pairs of trajectory states, where the first state comes from trajectory $\tb^A$ or is null (which we will denote as $t^A_0$), and the second state comes from trajectory $\tb^B$ or is null (which we will denote as $t^B_0$). Furthermore, $\pb$ must satisfy two properties:

\begin{itemize}
    \item boundary conditions: $p_0 = (t^A_0,t^B_0)$ and $p_L = (t^A_{n}, t^B_{m})$
    \item valid steps: $p_k = (t^A_i, t^B_j) \implies p_{k+1} \in \{(t^A_{i+1},t^B_j),(t^A_{i+1},t^B_{j+1}),(t^A_{i},t^B_{j+1})\}$.
\end{itemize}
\end{mydef}

Warping paths can be seen as traversals on a $(n+1)$-by-$(m+1)$ grid from the bottom left to the top right, where one is allowed to go up one step, right one step, or one step up and right, as shown in Fig. \ref{fig:warping} in Appendix \ref{app:figures}. We shall refer to these as \textit{vertical}, \textit{horizontal}, and \textit{diagonal} steps respectively.

\begin{mydef} Given a set of trajectories $\mathcal{T}$, a \textbf{warping distance} $d$ is a function that maps each pair of trajectories in $\mathcal{T}$ to a real number $\in [0, \infty)$. A warping distance is completely specified in terms of a cost function $c(\cdot, \cdot)$ on two pairs of trajectory states:

Let $\tb^A, \tb^B \in \mathcal{T}$. Then $d(\tb^A, \tb^B)$ is defined\footnote{A more general definition of warping distance replaces the summation over $c(p_{i-1},p)$ with a general class of statistics, that may include $\max$ and $\min$ for example. For simplicity, we present the narrower definition here.} as $d(\tb^A, \tb^B) = \min_{\pb \,\in P} \sum_{i=1}^{L}c(p_{i-1}, p_{i})$
%\begin{align}
%    \label{eq:warping_distance_definition}
%    \min_{\pb \,\in P} \sum_{i=1}^{L}c(p_{i-1}, p_{i})
%\end{align}

The function $c(p_{i-1},p_{i})$ represents the cost of taking the step from $p_{i-1}$ to $p_i$, and, in general, differs for horizontal, vertical, and diagonal steps. %$c(p_{i-1},p_{i}) \ge 0$ and $c(p_{i-1},p_{i}) \iff p_{i-1} = p_{i}$. 
$P$ is the set of all warping paths between $\tb^A$ and $\tb^B$.

% \begin{itemize}
    % \item The function $c(p_{i-1},p_{i}) \ge 0$ represents the cost of taking the step from $p_{i-1}$ to $p_i$, and, in general, differs in form for horizontal, vertical, and diagonal steps. (XX c should only be zero if the trajectories are the same)
    % \item The function $G$, which maps a set of real numbers to a real number, is defined recursively over its arguments using a binary operation $g(\cdot, \cdot) $, so that $G(\{s_1, \ldots s_L\}]) = g(s_1, g(s_2, \ldots g(s_{L-1},s_L) \ldots))$
% \end{itemize}

% $g(\cdot)$ should be an associative function that is monotonically non-decreasing and differentiable almost everywhere in each of its arguments individually. Examples of such functions include $g(x,y) = \max(x,y) \implies G(S) = max(S)$ and $g(x,y) = x+y \implies G(S) = \sum_{s \in S} s$.

\end{mydef}

Thus, a warping distance represents a particular optimization carried over all valid warping paths between two trajectories. In this paper, we define a family of warping distances $\mathcal{D}$,  with the following parametrization of $c(\cdot, \cdot)$:

\begin{equation}\boxed{c((t^A_i,t^B_{j}),(t^A_{i'},t^B_{j'})) = 
\begin{cases}
\sigma(\left\Vert t^A_{i'} - t^B_{j'}\right\Vert, \frac{\epsilon}{1-\epsilon}) & i'>i, j'>j \\ 
\frac{\alpha}{1-\alpha} \cdot \sigma(\left\Vert t^A_{i'} - t^B_{j'}\right\Vert, \frac{\epsilon}{1-\epsilon}) + \gamma & i'=i \text{ or } j'=j
\end{cases}}
\end{equation}

Here, we define $\sigma(x, y) \eqdef y \cdot \tanh(x/y)$ to be a soft thresholding function, such that $\sigma(x, y) \approx x$ if $0\le x \le y$ and $\sigma(x, y) \approx y$ if $x > y$. And, $\sigma(x, \infty) \eqdef x$. The family of distances $\mathcal{D}$ is parametrized by three parameters $\alpha, \gamma, \epsilon$. With this parametrization, $\mathcal{D}$ includes several commonly used warping distances for trajectories, as shown in Table \ref{table:distance_parametrizations}, as well as many other warping distances.

\noindent\makebox[\textwidth][c]{%
\begin{minipage}{\textwidth}
\renewcommand*\footnoterule{}
\begin{center}
\begin{longtable}{ |l|c|c|c| } 
 \hline
 Trajectory Distance & $\alpha$ & $\gamma$ & $\epsilon$ \\ [0.5ex] 
 \hline\hline
 Euclidean\footnote{The Euclidean distance between two trajectories is infinite if they are of different lengths} & \;\;\;\;\;\; 1 \;\;\;\;\;\; & \;\;\;\;\;\; 0 \;\;\;\;\;\; & $ 1$ \\
 Dynamic Time Warping  (DTW) \citep{sakoe1978dynamic} & 0.5 & 0 & 1 \\
 Edit Distance ($\gamma_0$) \citep{chen2004marriage} & 0 &  $0 < \gamma_0 $ & 1 \\
 Edit Distance on Real Sequences ($\gamma_0, \epsilon_0$) \citep{chen2005robust} \footnote{This is actually a smooth, differentiable approximation to EDR} & 0 & $0 < \gamma_0$ & $0 <\epsilon_0 < 1$ \\
 \hline
 \caption{Parametrization of common trajectory dissimilarities  \label{table:distance_parametrizations}}
 \vspace{-4mm}
 \end{longtable}
\end{center}
\end{minipage}}

\paragraph{Optimizing warping distance using betaCV}
\label{subsection:23}

Within our family of warping distances, how do we choose the one that aligns most closely with the learned latent representation? To allow a comparison between latent representations and trajectory distances, we use the concept of betaCV:

\begin{mydef}Given a set of trajectories $\mathcal{T} = \{\tb^1, \tb^2, \ldots \tb^T\}$, a trajectory metric $d$ and an assignment to clusters $C(i)$ for each trajectory $\tb^i$, the \textbf{betaCV}, denoted as $\beta$, is defined as:
\begin{equation}
 \beta(d) = \frac{\frac{1}{Z} \sum_{i=1}^T\sum_{j=1}^T d(\tb^i, \tb^j) \; 1\left[C(i)=C(j)\right]}{\frac{1}{T^2} \sum_{i=1}^T\sum_{j=1}^T d(\tb^i, \tb^j)},
 \label{eq:betacv}
\end{equation}
where $Z = \sum_{i=1}^T\sum_{j=1}^T 1\left[C(i)=C(j)\right]$ is the normalization constant needed to transform the numerator into an average of distances.
\end{mydef}

In the literature, the betaCV is used to evaluate different clustering assignments $C$ for a fixed distance \citep{zaki2014data}. In our work, it is the distance $d$ that is not known; were true cluster assignments known, the betaCV would be a natural quantity to minimize over the distances in $\mathcal{D}$, as it would give us a distance metric that minimizes the average distance of trajectories to other trajectories within the same cluster (normalized by the average distances across all pairs of trajectories).

However, as the clustering assignments are not known, we instead use the Euclidean distances between to the latent representations of two trajectories to determine whether they belong to the same ``cluster." In particular, we designate two trajectories as belonging to the same cluster if the distance between their latent representations is less than a threshold $\delta$, which is chosen as a percentile $\bar{p}$ of the distribution of distances between all pairs of latent representations. We will denote this version of the betaCV, calculated based on the latent representations learned by an autoencoder, as $\hat{\beta}_h(d)$: 

\begin{mydef}Given a set of trajectories $\mathcal{T} = \{\tb^1, \tb^2, \ldots \tb^T\}$, a metric $d$ and a latent representation for each trajectory $h_i$, the \textbf{latent betaCV}, denoted as $\hat{\beta}$, is defined as:

\begin{equation}
 \hat{\beta} = \frac{\frac{1}{Z} \sum_{i=1}^T\sum_{j=1}^T d(\tb^i, \tb^j) \; 1\left[\left\Vert h_i - h_j \right\Vert_2 < \delta \right]}{\frac{1}{T^2} \sum_{i=1}^T\sum_{j=1}^T d(\tb^i, \tb^j)},
\end{equation}

where $Z$ is a normalization constant defined analogously as in (\ref{eq:betacv}). The threshold distance $\delta$ is a hyperparameter for the algorithm, generally set to be a certain threshold percentile ($\bar{p}$) of all pairwise distances between latent representations. 
\end{mydef}

With this definition in hand, we are ready to specify how we choose a warping distance based on the latent representations. We choose the warping distance that gives us the lowest latent betaCV: $$\boxed{\hat{d} = \argmin_{d \in \mathcal{D}} \hat{\beta}(d).}$$
%\begin{equation}
%\hat{d} = \argmin_{d \in \mathcal{D}} \hat{\beta}(d)
%\end{equation}
We have seen that the learned representations $h_i$ are not always able to remove the noise present in the observed trajectories. It is natural to ask, then, whether it is a good idea to calculate the betaCV using the noisy latent representations, in place of true clustering assignments. In other word, suppose we computed $\beta$ based on known clusters assignments in a trajectory dataset. If we then computed $\hat{\beta}$ based on somewhat noisy learned latent representations, could it be that $\beta$ and $\hat{\beta}$ differ markedly? In Appendix \ref{app:proofs}, we carry out a theoretical analysis, assuming that the computation of $\hat{\beta}$ is based on a noisy clustering $\tilde{C}$. We present the conclusion of that analysis here:

\begin{proposition}[\textbf{Robustness of Latent BetaCV}] Let $d$ be a trajectory distance defined over a set of trajectories $\mathcal{T}$ of cardinality $T$. Let $\beta(d)$ be the betaCV computed on the set of trajectories using the true cluster labels $\{C(i)\}$. Let $\hat{\beta}(d)$ be the betaCV computed on the set of trajectories using noisy cluster labels $\{\tilde{C}(i)\}$, which are generated by independently randomly reassigning each $C(i)$ with probability $p$. For a constant $K$ that depends on the distribution of the trajectories, the probability that the latent betaCV changes by more than $x$ beyond the expected $Kp$ is bounded by:
\begin{align}
\Pr(|\beta - \hat{\beta}| > Kp + x) \le e^{-2Tx^2/K^2} 
\end{align}
\end{proposition}

This result suggests that a latent betaCV computed based on latent representations may still be a reliable metric even when the latent representations are somewhat noisy. In practice, we find that the quality of the autoencoder does have an effect on the quality of the learned warping distance, up to a certain extent. We quantify this behavior using an experiment showin in Fig. \ref{fig:training_latent_betacv} in Appendix \ref{app:figures}.

\section{Efficiently Implementing Autowarp}

There are two computational challenges to finding an appropriate warping distance. One is efficiently searching through the continuous space of warping distances. In this section, we show that the computation of the BetaCV over the family of warping distances defined above is differentiable with respect to quantities $\alpha, \gamma, \epsilon$ that parametrize the family of warping distances. Computing gradients over the whole set of trajectories is still computationally expensive for many real-world datasets, so we introduce a method of sampling trajectories that provides significant speed gains. The formal outline of Autowarp is in Appendix B. 

% \subsection{Differentiability of betaCV}

\paragraph{Differentiability of betaCV.} In Section \ref{subsection:23}, we proposed that a warping distance can be identified by the distance $d \in \mathcal{D}$ that minimizes the BetaCV computed from the latent representations. Since $\mathcal{D}$ contains infinitely many distances, we cannot evaluate the BetaCV for each distance, one by one. Rather, we solve this optimization problem using gradient descent.
In Appendix \ref{app:proofs}, we prove the  that BetaCV is differentiable with respect to the parameters $\alpha, \gamma, \epsilon$ and the gradient can be computed in $O(T^2 N^2)$ time, where $T$ is the number of trajectories in our dataset and $N$ is the number of elements in each trajectory (see Proposition \ref{proposition:2}). 

%This proposition allows to compute gradients analytically, and perform gradient descent on the betaCV optimization function

%and in fact, is generally differentiable as long as the cost function $c(\cdot)$ and aggregating statistic $G(\cdot)$ satisfy certain differentiability properties. This is stated formally in the next proposition.  

%\vspace{1mm}
%\begin{proposition}[\textbf{Differentiability of Latent BetaCV}] Let $\mathcal{D}$ be a family of warping distances that share a cost function $c(\cdot)$, parametrized by $\theta$. If $c(\cdot)$ is differentiable w.r.t. $\theta$, then the latent betaCV computed on a set of trajectories is also differentiable w.r.t. $\theta$ almost everywhere.
%\end{proposition}

%See Appendix \ref{app:proofs} for proof by construction, where we show that an analytical gradient can be computed in $O(T^2 N^2)$ time, where $T$ is the number of trajectories in our dataset and $N$ is the number of elements in each trajectory. This proposition allows to compute gradients analytically, and perform gradient descent on the betaCV optimization function.

% \subsection{Batched gradient descent}

\paragraph{Batched gradient descent.} When the size of the dataset becomes modestly large, it is no longer feasible to re-compute the exact analytical gradient at each step of gradient descent. Instead, we take inspiration from negative sampling in word embeddings \citep{mikolov2013distributed}, and only sample a fixed number, $S$, of pairs of trajectories at each step of gradient descent. This reduces the runtime of each step of gradient descent to $O(SN^2)$, where $S \approx 32-128$ in our experiments. Instead of the full gradient descent, this effectively becomes batched gradient descent. The complete algorithm for batched Autowarp is shown in Algorithm \ref{alg:autowarp} in Appendix \ref{app:algorithm}. Because the betaCV is not convex in terms of the parameters $\alpha, \gamma, \epsilon$, we usually repeat Algorithm  \ref{alg:autowarp} with multiple initializations and choose the parameters that produce the lowest betaCV.

% [Figure 4: Show plots of gradient descent with sampling]

\section{Validation}

Recall that Autowarp learns a distance from unlabeled trajectory data in two steps: first, a latent representation is learned for each trajectory; secondly, a warping distance is identified that is most similar to the learned latent representations. In this section, we empirically validate this methodology. 

% \subsection{Does minimizing betaCV recover the appropriate distance?}

\begin{figure}[]
\centering
\includegraphics[width=0.95\linewidth]{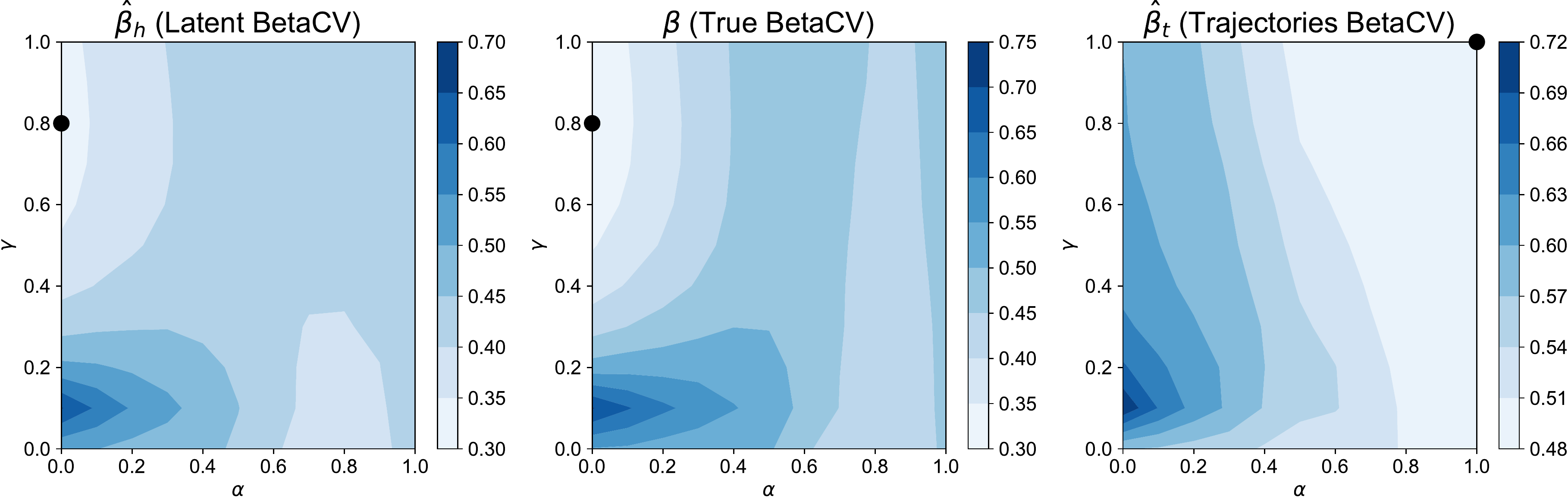}
{\caption{\textbf{Validating Latent BetaCV.} We construct a synthetic time series dataset with Gaussian noise and outliers added to the trajectories. We compute the latent betaCV for various distances (left), which closely matches the plot of the true betaCV (middle) computed based on knowledge of the seed clusters. As a control, we plot the betaCV computed based on the original trajectories (right). Black dots represent the optimal value of $\alpha$ and $\gamma$ in each plot. Lower betaCV is better. \label{fig:betacv}}}
\end{figure}

\paragraph{Validating latent betaCV.} We generate synthetic trajectories that are copies of a seed trajectory with different kinds of noise added to each trajectory. We then measure the $\hat{\beta}_h$ for a large number of distances in $\mathcal{D}$. Because these are synthetic trajectories, we compare this to the true $\beta$ measured using the known cluster labels (each seed generates one cluster). As a control, we also consider computing the betaCV based on the Euclidean distance of the original trajectories, rather than the Euclidean distance between the latent representations. We denote this quantity as $\hat{\beta}_t$ and treat it as a control.

Fig. \ref{fig:betacv} shows the plot when the noise takes the form of adding outliers and Gaussian noise to the data. The betaCVs are plotted for distances $d$ for different values of $\alpha$ and $\gamma$ with $\epsilon=1$. Plots for other kinds of noise are included in Appendix \ref{app:figures} (see Fig. \ref{fig:beta_cv_all_plots}). These plots suggest that $\hat{\beta}_h$ assigns each distance a betaCV that is representative of the true clustering labels. Furthermore, we find that the distances that have the lowest betaCV in each case concur with previous studies that have studied the robustness of different trajectory distances. For example, we find that DTW ($\alpha=0.5, \gamma=0$) is the appropriate distance metric for resampled trajectories, Euclidean ($\alpha=1, \gamma =0$) for Gaussian noise, and edit distance ($\alpha=0, \gamma \approx 0.4$) for trajectories with outliers.

\begin{figure}[]
\centering

\subfloat[]{%
\includegraphics[width=0.44\linewidth]{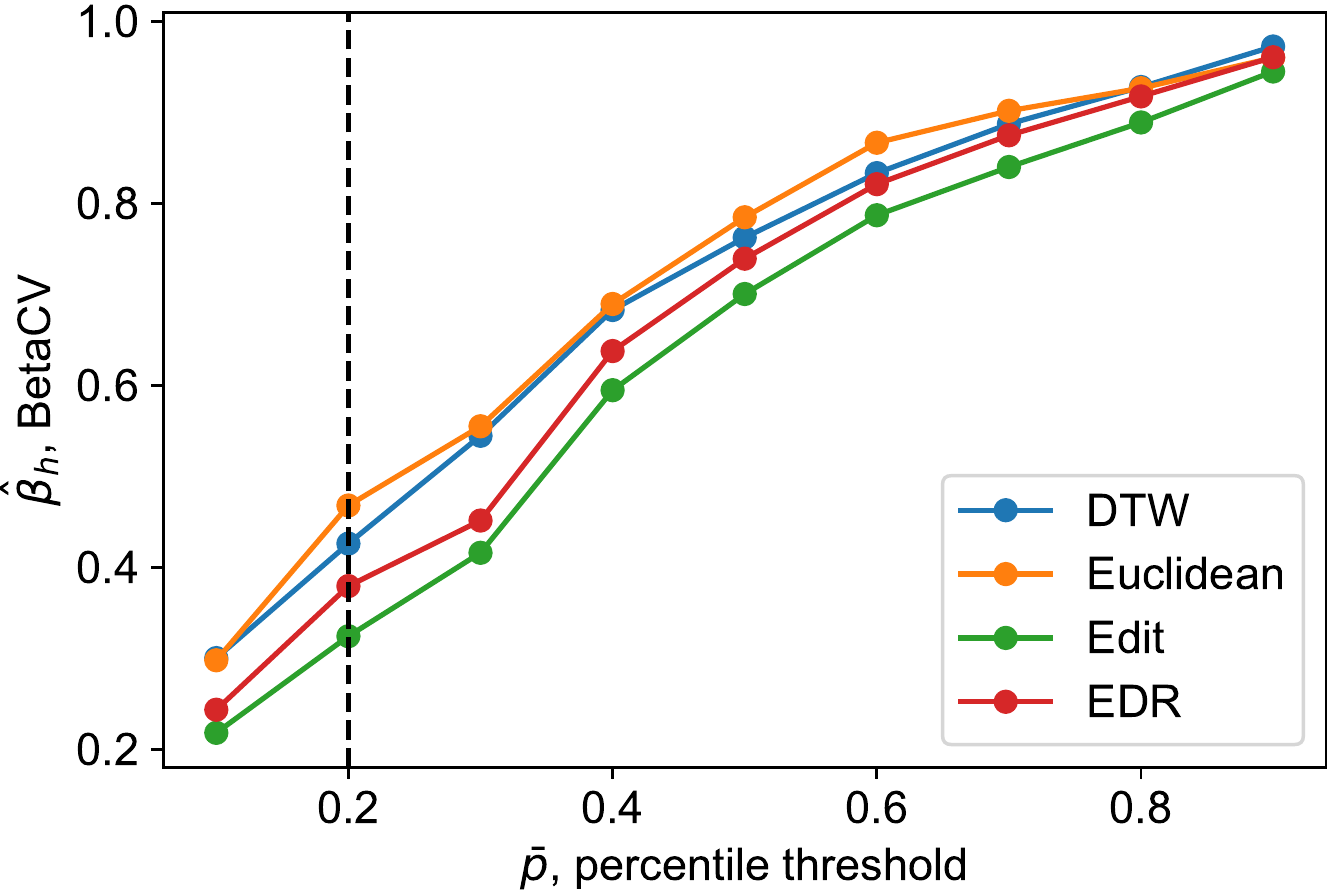}
}
\quad
\subfloat[]{%
\includegraphics[width=0.44\linewidth]{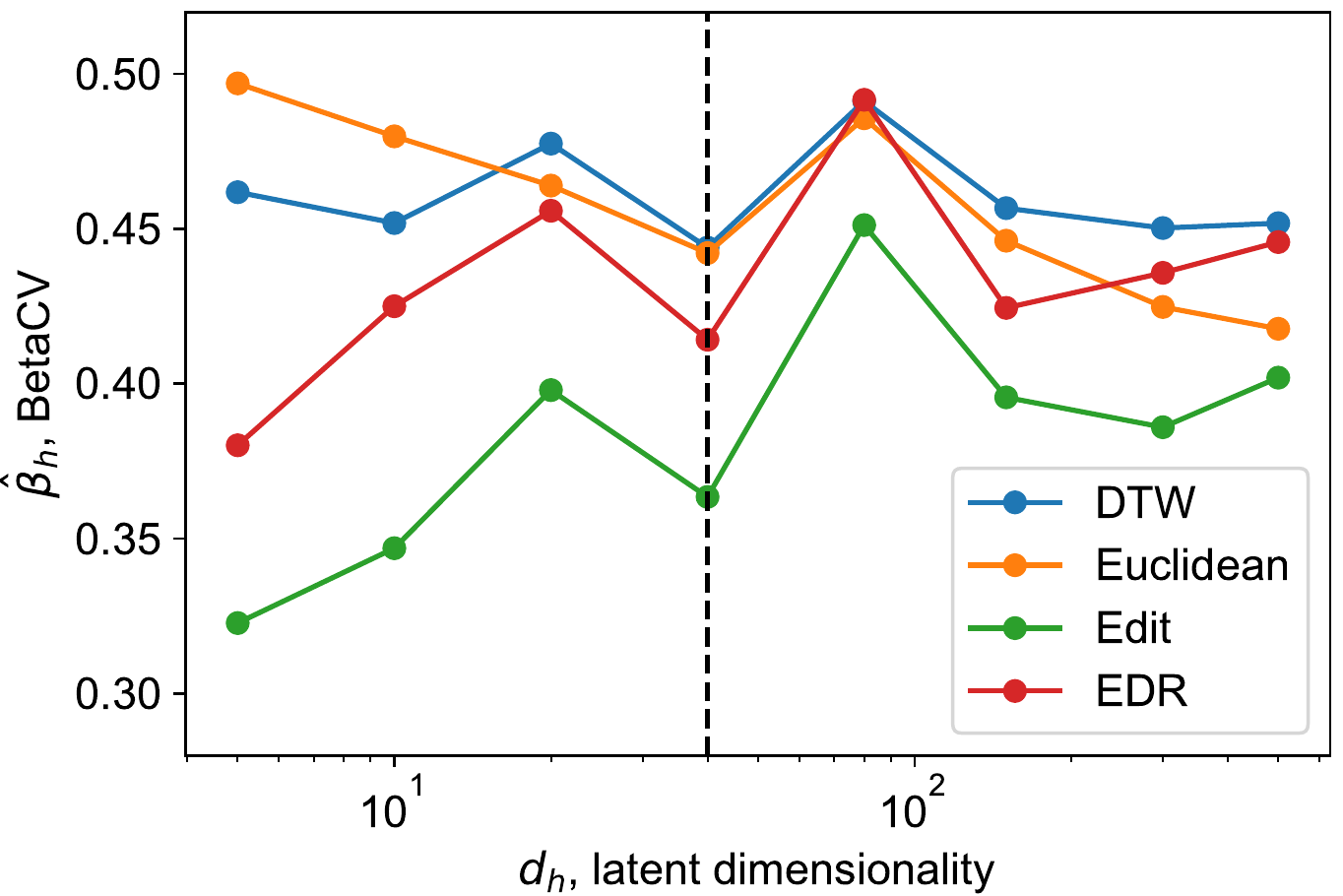}
}

\caption{\textbf{Sensitivity Analysis on Trajectories with Outliers.} (a) We investigate how the percentile threshold parameter affects latent betaCV. (b) We also investigate the effect of changing the latent dimensionality on the relative ordering of the distances. We find that the qualitative ranking of different distances is generally robust to the choice of these hyperparameters.}
\label{fig:sensitivity}
\end{figure}

% \subsection{Ablation and sensitivity analysis}

\paragraph*{Ablation and sensitivity analysis.} Next, we investigate the sensitivity of the latent betaCV calculation to the hyperparameters of the algorithm. We find that although the betaCV changes as the threshold changes, the relative ordering of different warping distances mostly remains the same. Similarly, we find that the dimension of the hidden layer  in the autoencoder can vary significantly without significantly affecting qualitative results (see Fig. \ref{fig:sensitivity}). For a variety of experiments, we find that a reasonable number of latent dimensions is $\approx L \cdot D$, where $L$ is the average trajectory length and $D$ the dimensionality.
We also investigate whether both the autoencoder and the search through warping distances are necessary for effective metric learning. Our results indicate that both are needed: using the latent representations alone results in noisy clustering, while the warping distance search cannot be applied in the original trajectory space to get meaningful results (Fig. \ref{fig:mds}).

% \begin{figure}[]
% \centering

% \subfloat[]{%
% \includegraphics[width=0.45\linewidth]{sensitivity.pdf}
% }
% \quad
% \subfloat[]{%
% \includegraphics[width=0.459\linewidth]{sensitivity2-4-2.pdf}
% }

% \caption{\textbf{Sensitivity Analysis on Trajectories with Outliers.} (a) We investigate how the percentile threshold parameter affects analysis (b) We}
% \label{fig:warping}
% \end{figure}

% \subsection{Does the autowarp help downstream classification?}
\label{subsection:downstream}

\paragraph{Downstream classification.} A key motivation of distance metric learning is the ability to perform downstream classification and clustering tasks more effectively. We validated this on a real dataset: the Libras dataset, which consists of coordinates of users performing Brazilian sign language. The $x$- and $y$-coordinates of the positions of the subjects' hands are recorded, as well as the symbol that the users are communicating, providing us labels to evaluate our distance metrics.

For this experiment, we chose a subset of 40 trajectories from 5 different categories. For a given distance function $d$, we iterated over every trajectory and computed the 7 closest trajectories to it (as there are a total of 8 trajectories from each category). We computed which fraction of the 7 shared their label with the original trajectory. A good distance should provide us with a higher fraction. We evaluated 50 distances: 42 of them were chosen randomly, 4 were well-known warping distances, and 4 were the result of performing Algorithm 1 from different initializations. We measured both the betaCV of each distance, as well as the accuracy. The results are shown in Fig. \ref{fig:libras}, which shows a clear negative correlation (rank correlation is $=0.85$) between betaCV and label accuracy.

\vspace{3mm}
\floatbox[{\capbeside\thisfloatsetup{capbesideposition={right,center},capbesidewidth=7cm}}]{figure}[\FBwidth]
{\caption{\textbf{Latent BetaCV and Downstream Classification.} Here, we choose 50 warping distances and plot the latent betaCV of each one on the Libras dataset, along with the average classification when each trajectory is used to classify its nearest neighbors. Results suggest that minimizing latent betaCV provides a suitable distance for downstream classification. \label{fig:libras}}}
{
\includegraphics[width=0.3\textwidth]{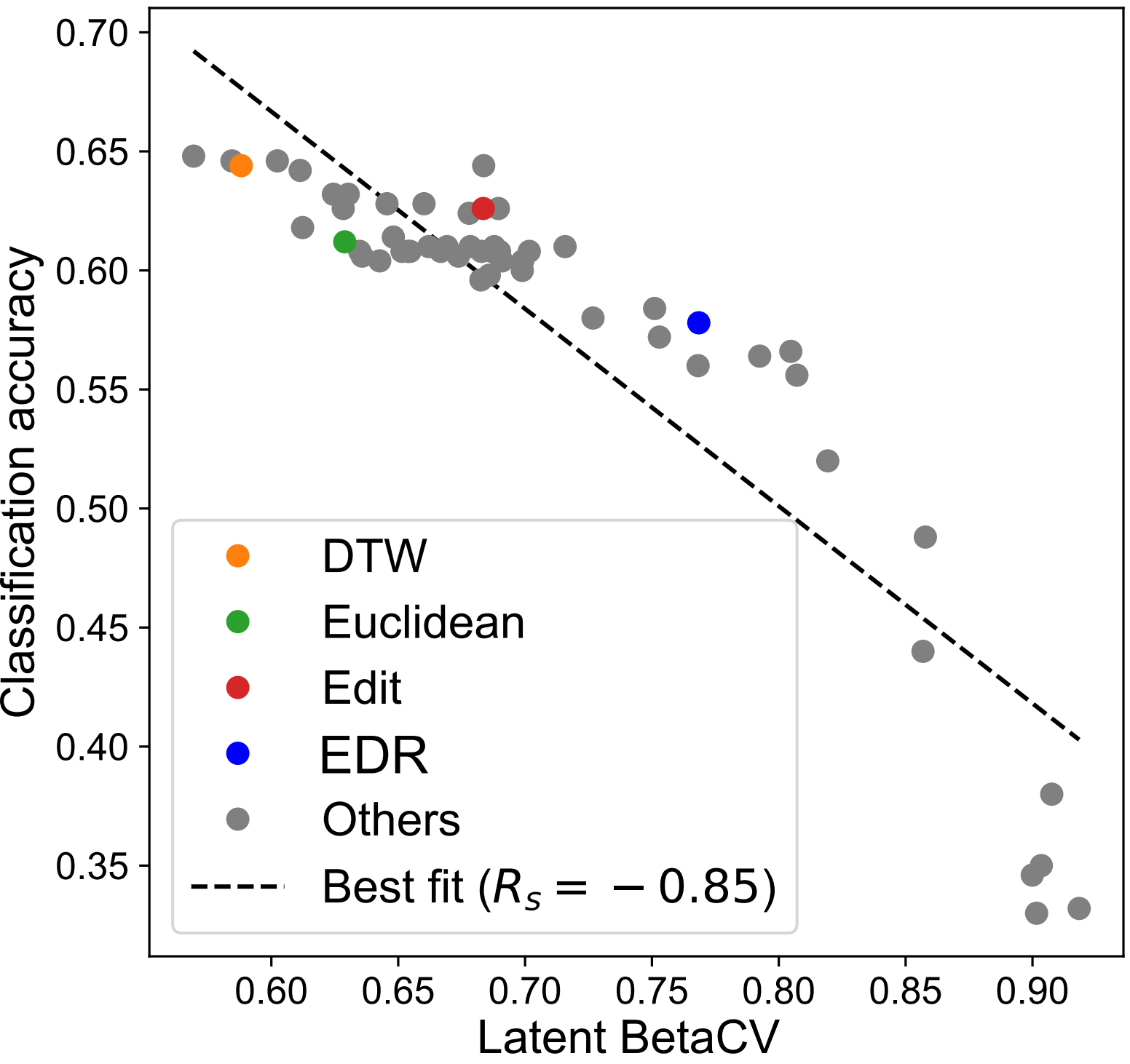}
}

% \begin{figure}[]
% \centering

% \subfloat[]{%
% \includegraphics[width=0.56\linewidth]{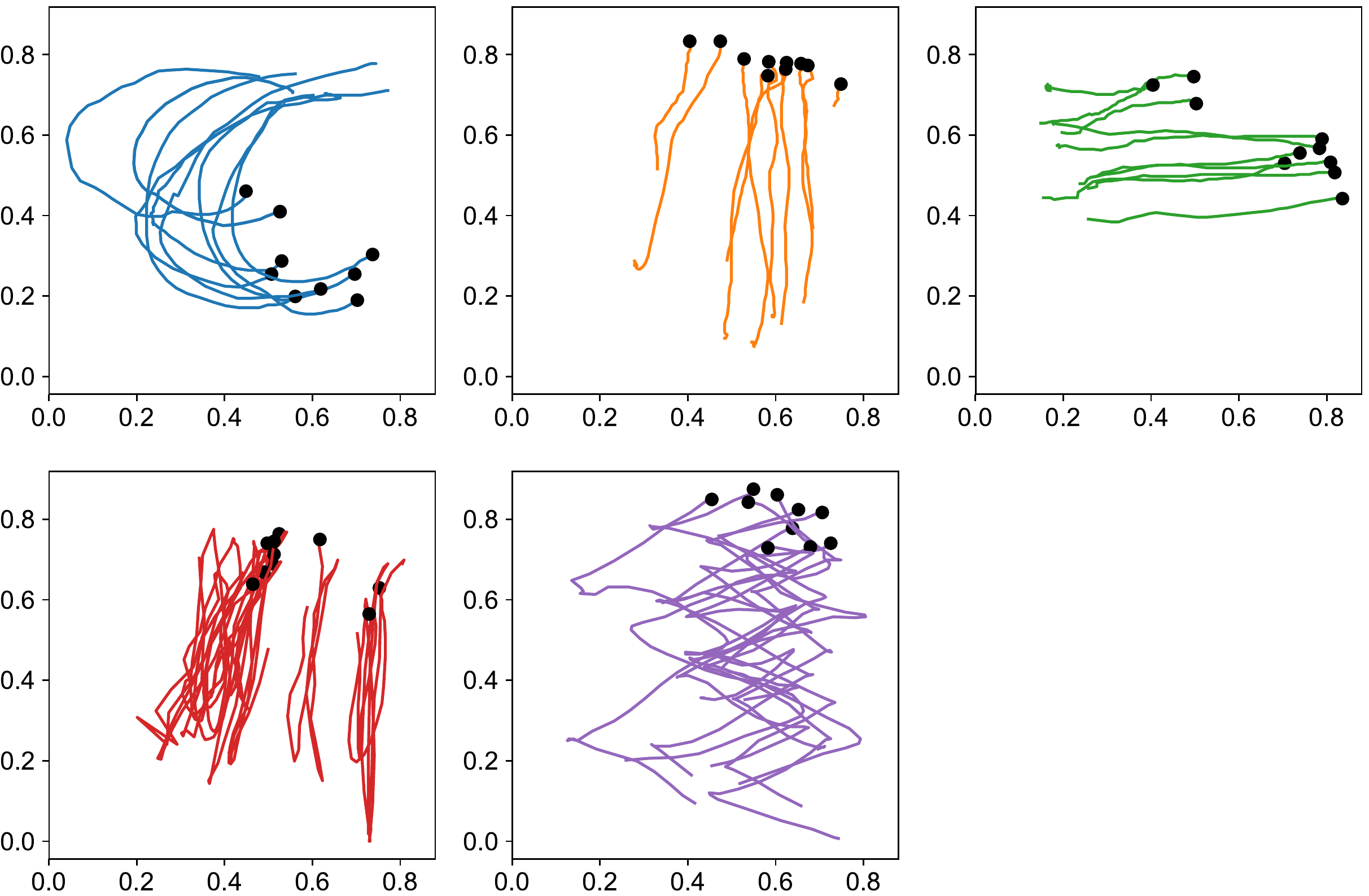}
% }
% \quad
% \subfloat[]{%
% \includegraphics[width=0.39\linewidth]{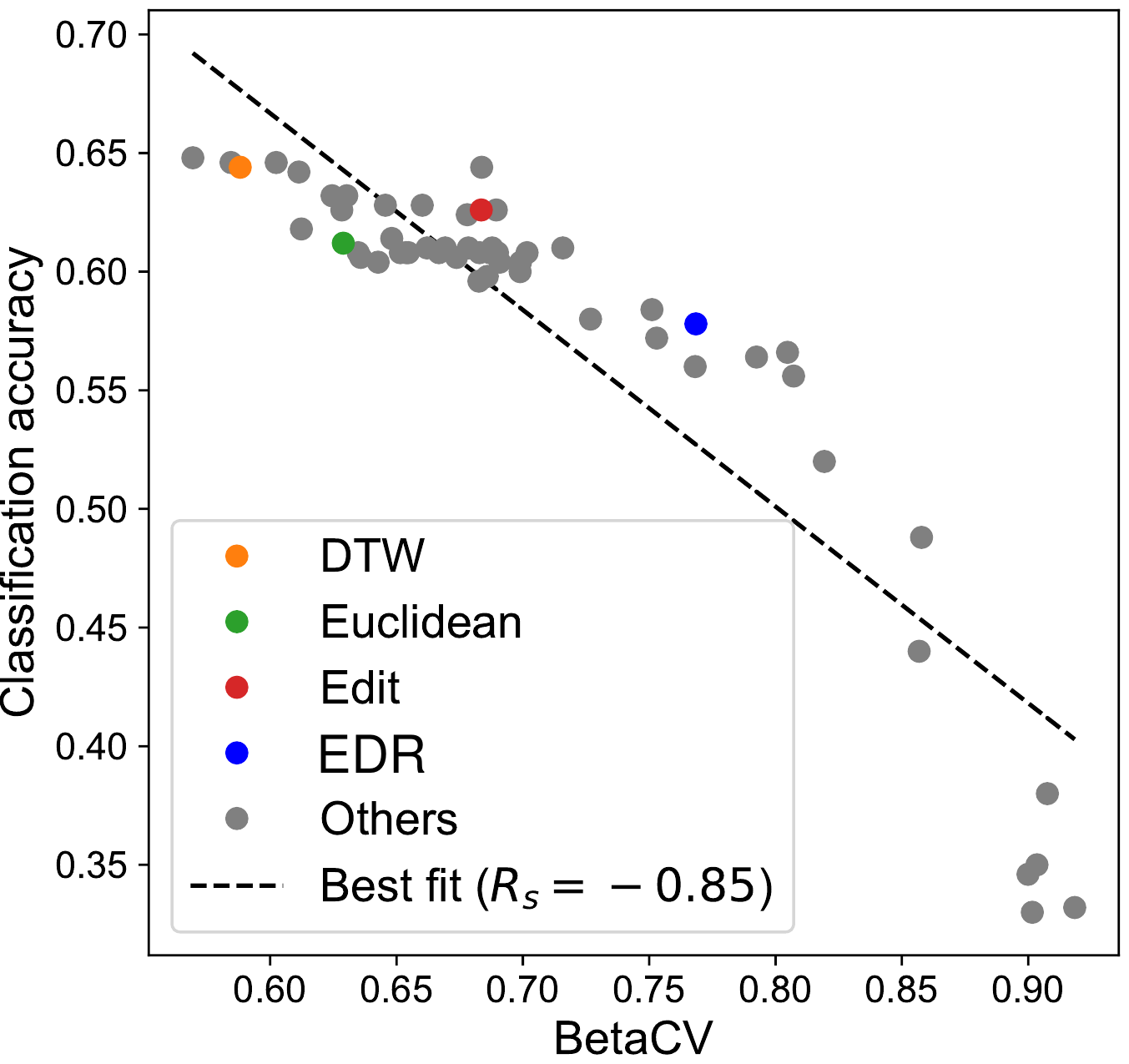}
% }

% \caption{\textbf{Latent BetaCV and Classification Accuracy on the Libras Dataset.} (a) Here, we plot the five kinds of trajectories and the eight examples of each (the labels are not provided to autowarp, but are used to evaluate classification accuracy) (b) We plot.}
% \label{fig:warping}
% \end{figure}

\section{Autowarp Applied to Real Datasets}
\label{section:experiments}
\vspace{-0.3cm}

Many of the hand-crafted distances mentioned earlier in the manuscript were developed for and tested on  particular time series datasets. We now turn to two such public datasets, and demonstrate how Autowarp can be used to \textit{learn} an appropriate warping distance from the data. We show that the warping distance that we learn is competitive with the original hand-crafted distances.

\begin{figure}[]
\centering

\subfloat[]{%
\includegraphics[width=0.30\linewidth]{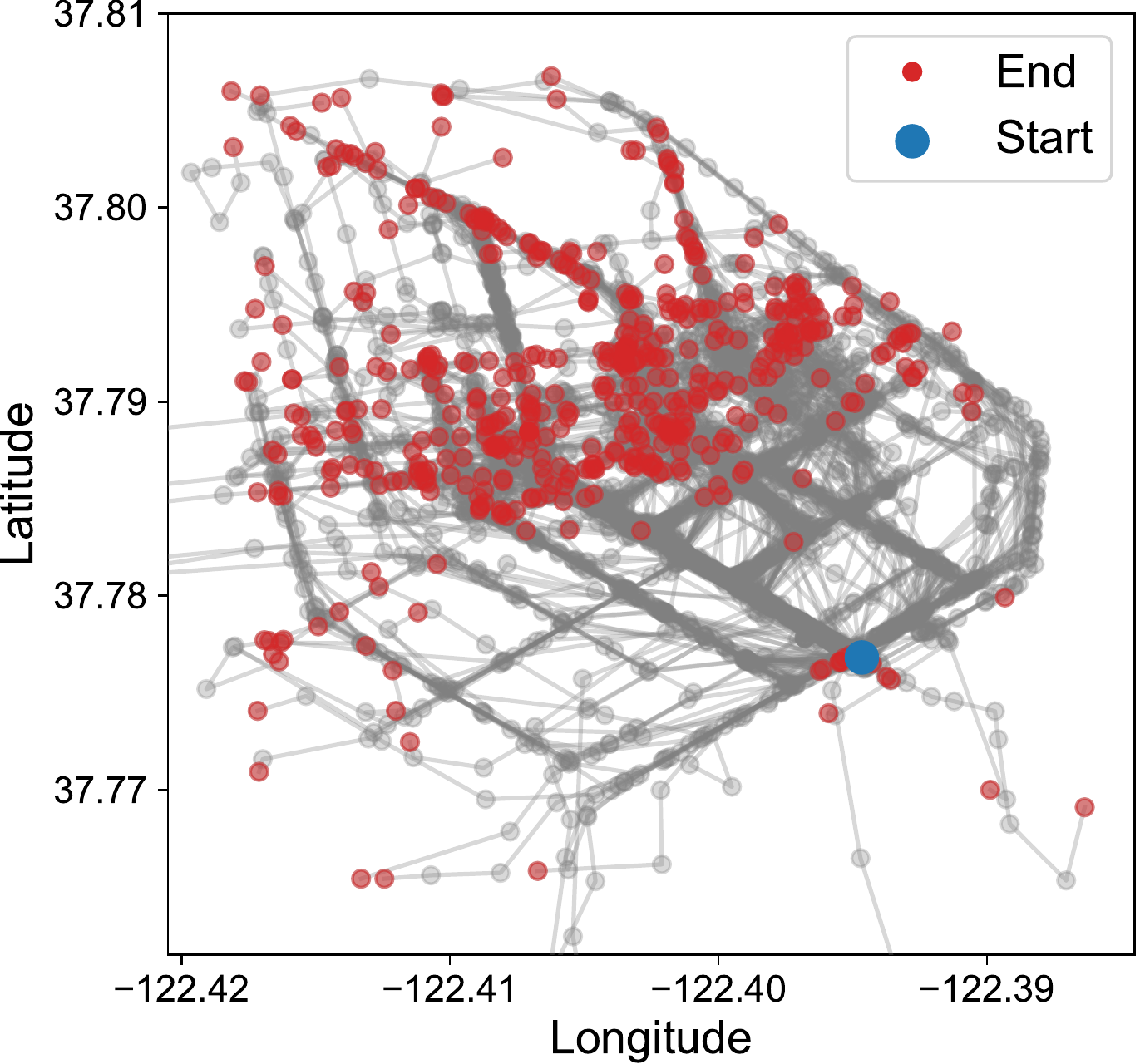}
}
\quad
\subfloat[]{%
\includegraphics[width=0.295\linewidth]{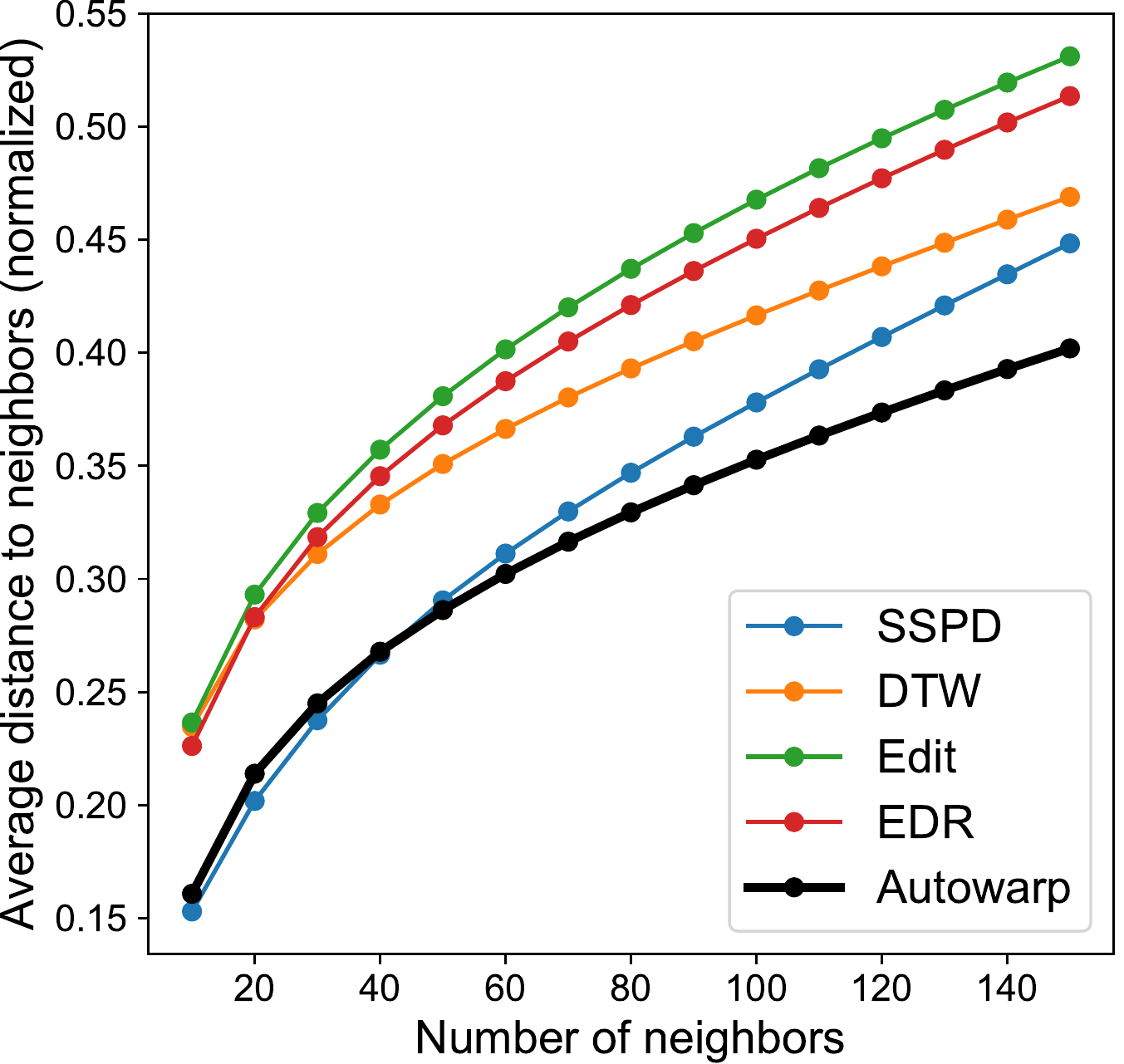}
}
\quad
\subfloat[]{%
\includegraphics[width=0.30\linewidth]{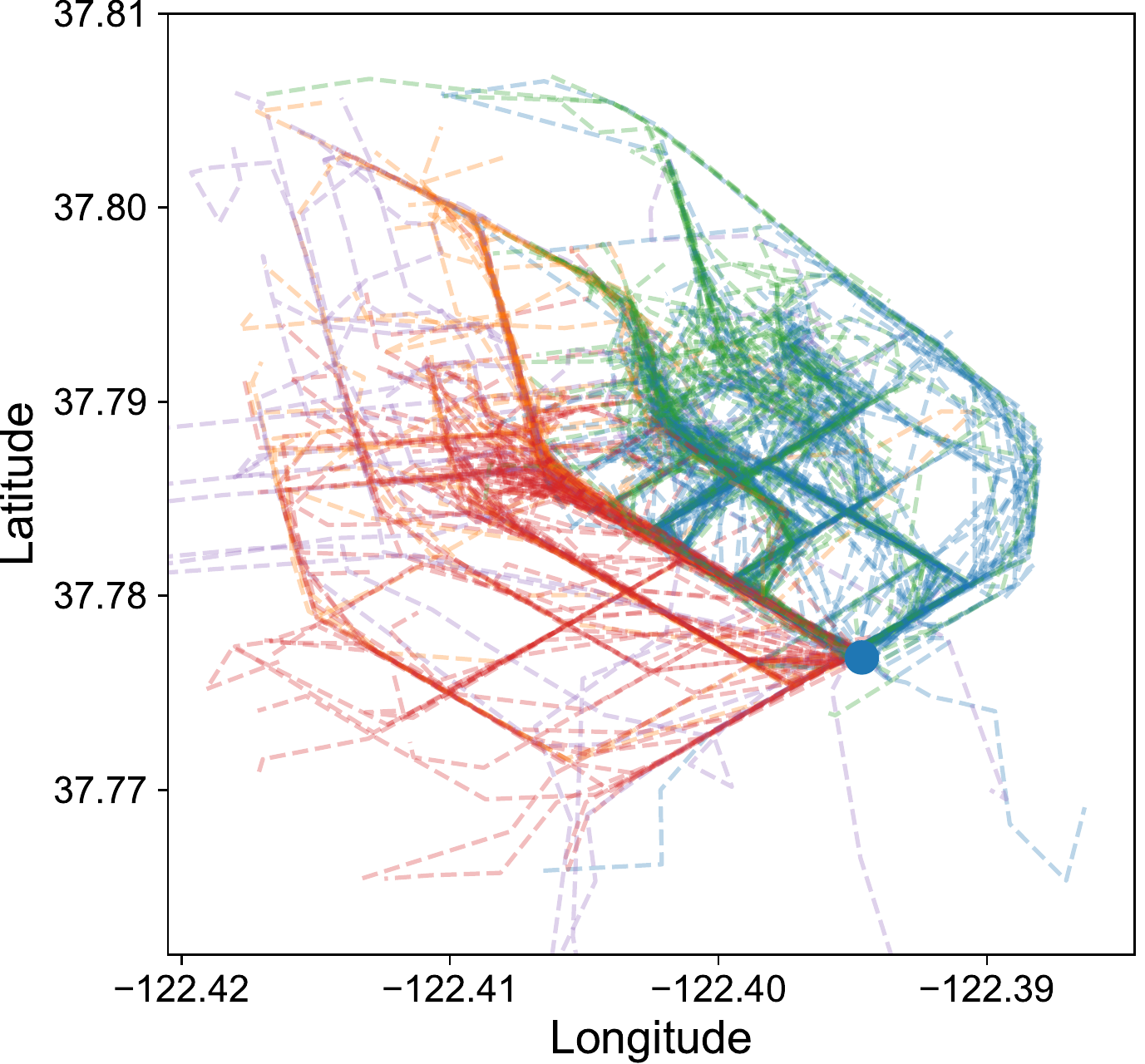}
}

\caption{\textbf{Taxicab Mobility Dataset} (a) We plot the trajectories, along with their start and end points. (b) We evaluate the average normalized distance to various numbers of neighbors for five different trajectory distances, and find that the Autowarp distance (black line) produces the most compact clusters (c) We apply spectral clustering with 5 different clusters (each color represents a different cluster) using the Autowarp learned distance.}
\label{fig:cab}
\end{figure}

\paragraph*{Taxicab Mobility Dataset.} We first turn to a dataset that consists of GPS measurements from 536 San-Francisco taxis over a 24-day period\footnote{Data can be downloaded from \url{https://crawdad.org/epfl/mobility/20090224/cab/}.}. This dataset was used to test the SSPD distance metric for trajectories \citep{besse2015review}. Like the authors of \citep{besse2015review}, we preprocessed the dataset to include only those trajectories that begin when a taxicab has picked up a passenger at the the Caltrain station in San Francisco, and whose drop-off location is in downtown San Francisco. This leaves us $T=500$ trajectories, with a median length of $N=9$. Each trajectory is 2-dimensional, consisting of an x- and y-coordinate. The trajectories are plotted in Fig. \ref{fig:cab}(a). We used Autowarp (Algorithm 1 with hyperparameters  $d_h=10, S=64, \bar{p} = 1/5$) to learn a warping distance from the data ($\alpha=0.88, \gamma=0, \epsilon=0.33$). This distance is similar to Euclidean distance; this may be because the GPS timestamps are regularly sampled. The small value of $\epsilon$ suggests that some thresholding is needed for an optimal distance, possibly because of irregular stops or routes taken by the taxis. 

The trajectories in this dataset are not labeled, so to evaluate the quality of our learned distance, we compute the average distance of each trajectory to its $k$ closest neighbors, normalized. This is analogous to how to the original authors evaluated their algorithm: the lower the normalized distance, the more ``compact" the clusters. We show the result of our Fig. \ref{fig:cab}(b) for various values of $k$, showing that the learned distance is as compact as SSPD, if not more compact. We also visualize the results when our learned distance metric is used to cluster the trajectories into 5 clusters using spectral clustering in Fig. \ref{fig:cab}(c).

\paragraph*{Australian Sign Language (ASL) Dataset.} Next, we turn to a dataset that consists of measurements taken from a smart glove worn by a sign linguist\footnote{Data can be downloaded from \url{http://kdd.ics.uci.edu/databases/auslan/auslan.data.html}.}. This dataset was used to test the EDR distance metric \citep{chen2005robust}. Like the original authors, we chose a subset consisting of $T=50$ trajectories, of median length $N=53$. This subset included 10 different classes of signals. The measurements of the glove are 4-dimensional, including x-, y-, and z-coordinates, along with the rotation of the palm.

We used Autowarp (Algorithm 1 with hyperparameters  $d_h=20, S=32, \bar{p} = 1/5$) to learn a warping distance from the data (learned distance: $\alpha=0.29, \gamma=0.22, \epsilon=0.48$). The trajectories in this dataset are labeled, so to evaluate the quality of our learned distance, we computed the accuracy of doing nearest neighbors on the data. Most distance functions achieve a reasonably high accuracy on this task, so like the authors of \citep{chen2005robust}, we added various sources of noise to the data. We evaluated the learned distance, as well as the original distance metric on the noisy datasets, and find that the learned distance is significantly more robust than EDR, particularly when multiple sources of noise are simultaneously added, denoted as "hybrid" noises in Fig. \ref{fig:asl}.

\vspace{3mm}
\floatbox[{\capbeside\thisfloatsetup{capbesideposition={left,center},capbesidewidth=5cm}}]{figure}[\FBwidth]
{\caption{\textbf{ASL Dataset.} We use various distance metrics to perform nearest-neighbor classifications on the ASL dataset. The original ASL dataset is shown on the left, and various synthetic noises have been added to generate the results on the right. `Hybrid1' is a combination of Gaussian noise and outliers, while `Hybrid2' refers to a combination of Gaussian and sampling noise. \label{fig:asl}}}
{
\includegraphics[width=0.6\textwidth]{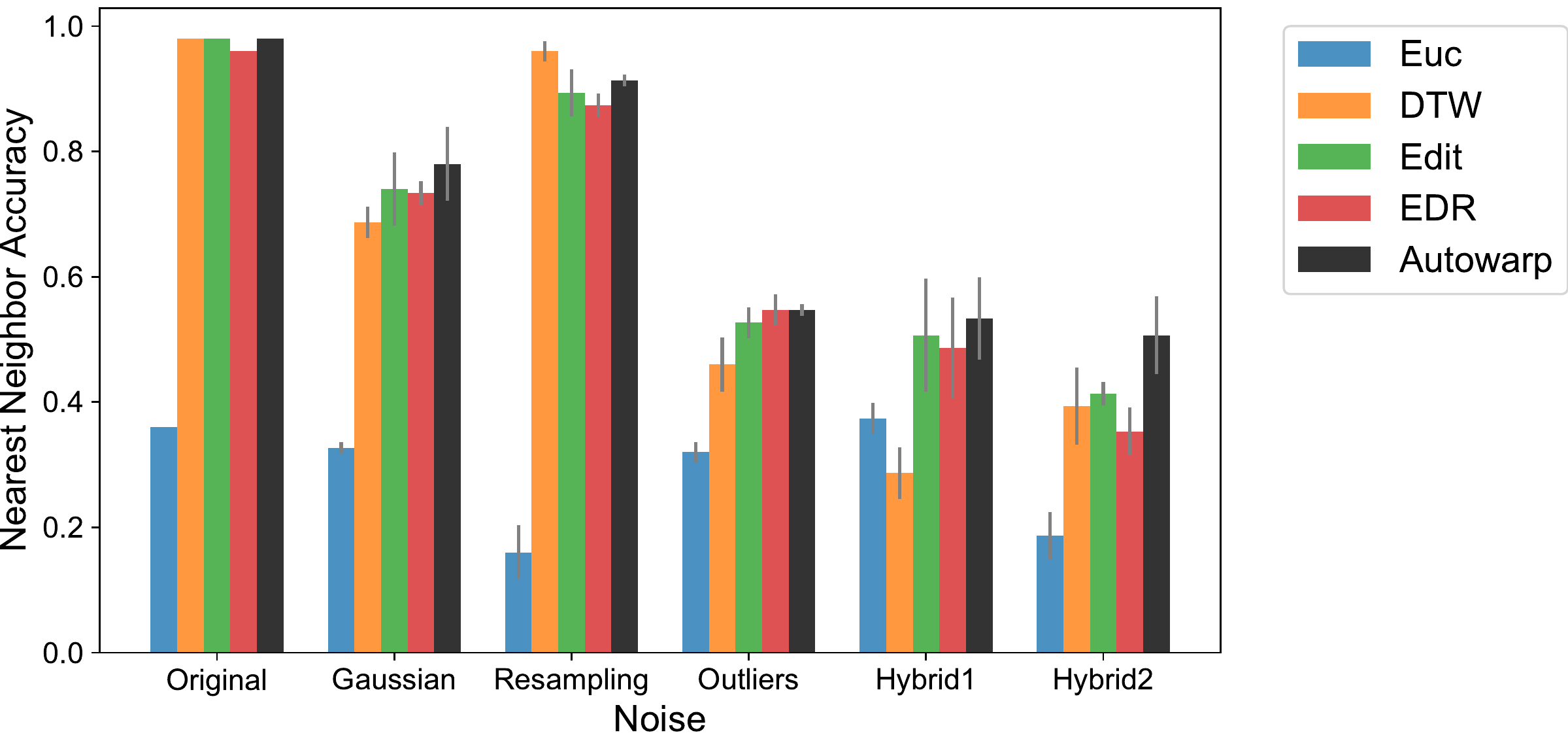}
}

\section{Discussion}
In this paper, we propose Autowarp, a novel method to learn a similarity metric from a dataset of unlabeled trajectories. Our method learns a warping distance that is similar to latent representations that are learned for a trajectory by a sequence-to-sequence autoencoder. We show through systematic experiments that learning an appropriate warping distance can provide insight into the nature of the time series data, and can be used to cluster, query, or visualize the data effectively.

Our experiments suggest that both steps of Autowarp -- first, learning latent representations using sequence-to-sequence autoencoders, and second, finding a warping distance that agrees with the latent representation -- are important to learning a good similarity metric. In particular, we carried out experiments with deeper autoencoders to determine if increasing the capacity of the autoencoders would allow the autoencoder alone to learn a similarity metric. Our results, some of which are shown in Figure \ref{fig:autoencoder_complexity} in Appendix \ref{app:figures}, show that even deeper autoencoders are unable to learn useful similarity metrics, without the regularization afforded by restricting ourselves to a family of warping distances.

Autowarp can be implemented efficiently because we have defined a differentiable, parametrized family of warping distances over which it is possible to do batched gradient descent. Each step of batched gradient descent can be computed in time $O(SN^2)$, where $S$ is the batch size, and $N$ is the number of elements in a given trajectory. There are further possible improvements in speed, for example, by leveraging techniques similar to FastDTW \citep{salvador2007toward}, which can approximate any warping distance in linear time, bringing the run-time of each step of batched gradient descent to $O(SN)$.

Across different datasets and noise settings, Autowarp is able to perform as well as, and often better, than the hand-crafted similarity metric designed specifically for the dataset and noise. For example, in Figure \ref{fig:cab}, we note that the Autowarp distance performs almost as well as, and in certain settings, even better than the SSPD metric on the Taxicab Mobility Dataset, for which the SSPD metric was specifically crafted. Similarly, in Figure \ref{fig:asl}, we show that the Autowarp distance outperforms most other distances on the ASL dataset, including the EDR distance, which was validated on this dataset. These results confirm that Autowarp can learn useful distances without prior knowledge of labels or clusters within the data. Future work will extend these results to more challenge time series data, such as those with higher dimensionality or heterogeneous data.

\section*{Acknowledgments}

We are grateful to many people for providing helpful suggestions and comments in the preparation of this manuscript. Brainstorming discussions with Ali Abdalla provided the initial sparks that led to the Autowarp algorithm, and discussions with Ali Abid were instrumental in ensuring that the formulation of the algorithm was clear and rigorous. Feedback from Amirata Ghorbani, Jaime Gimenez, Ruishan Liu, and Amirali Aghazadeh was invaluable in guiding the experiments and analyses that were carried out for this paper.

\newpage

\bibliography{main}
\bibliographystyle{unsrt}

\newpage
\renewcommand{\appendixpagename}{Supplementary Materials}
\renewcommand\thefigure{S\arabic{figure}}
\setcounter{figure}{0}    
\setcounter{proposition}{0}    

\begin{appendices}

\section{Additional Figures}
\label{app:figures}

\begin{figure}[H]
\centering
\includegraphics[width=0.7\linewidth]{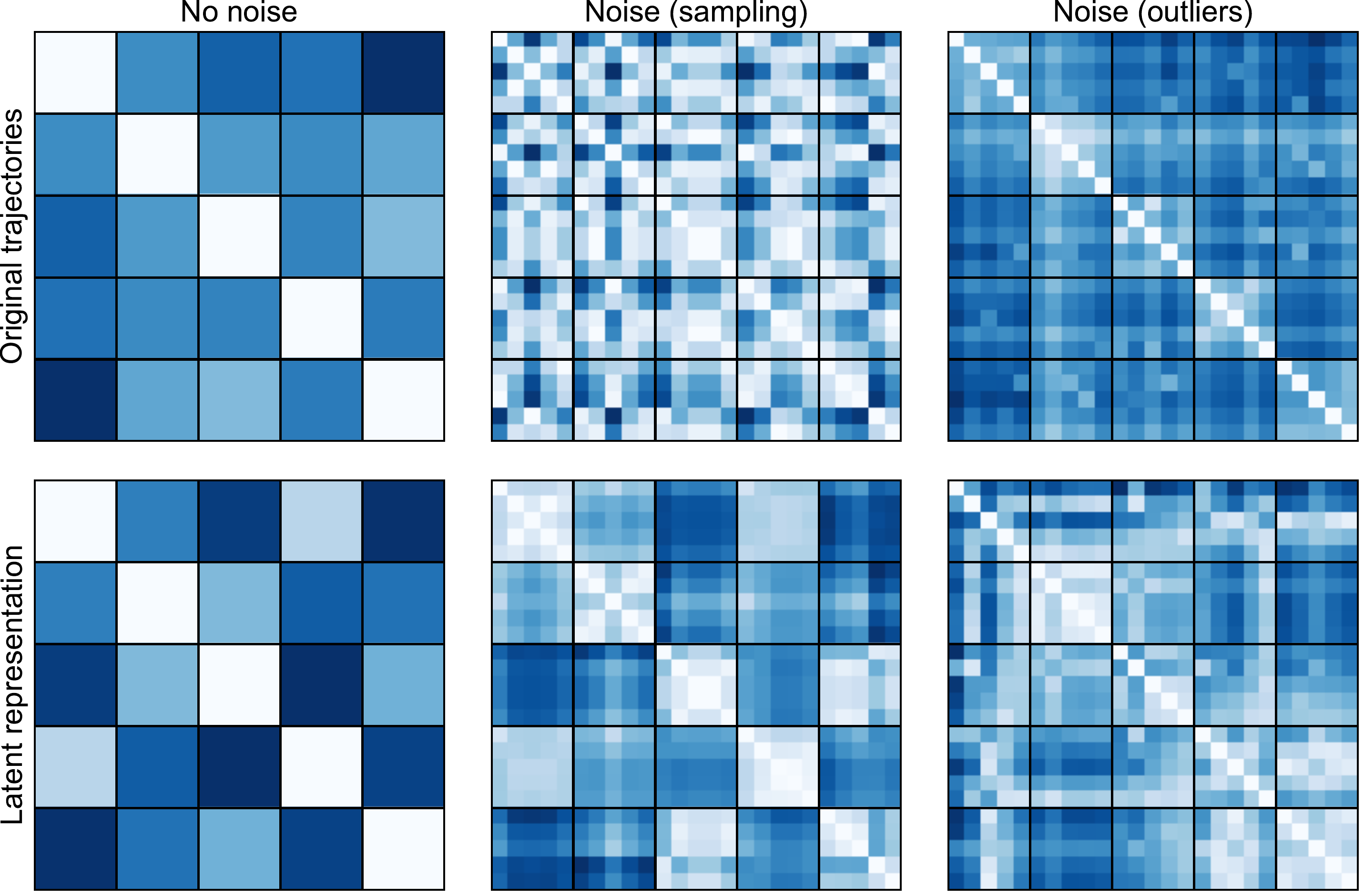}
\caption{\textbf{Results of the Sequence-Sequence Autoencoder.} Here, we generate a synthetic dataset consisting of 5 copies of 5 trajectories. We then add sampling noise (middle column) or outliers (right column) to the trajectories. We measure the Euclidean distance between the trajectories in the original space (top row) and in the latent space learned by the autoencoder (bottom row). Results suggest that the autoencoder is partially able to recover the structure of the original trajectories.}
\label{fig:autoencoder_results}
\end{figure}

\begin{figure}[H]
\centering
\includegraphics[width=0.85\linewidth]{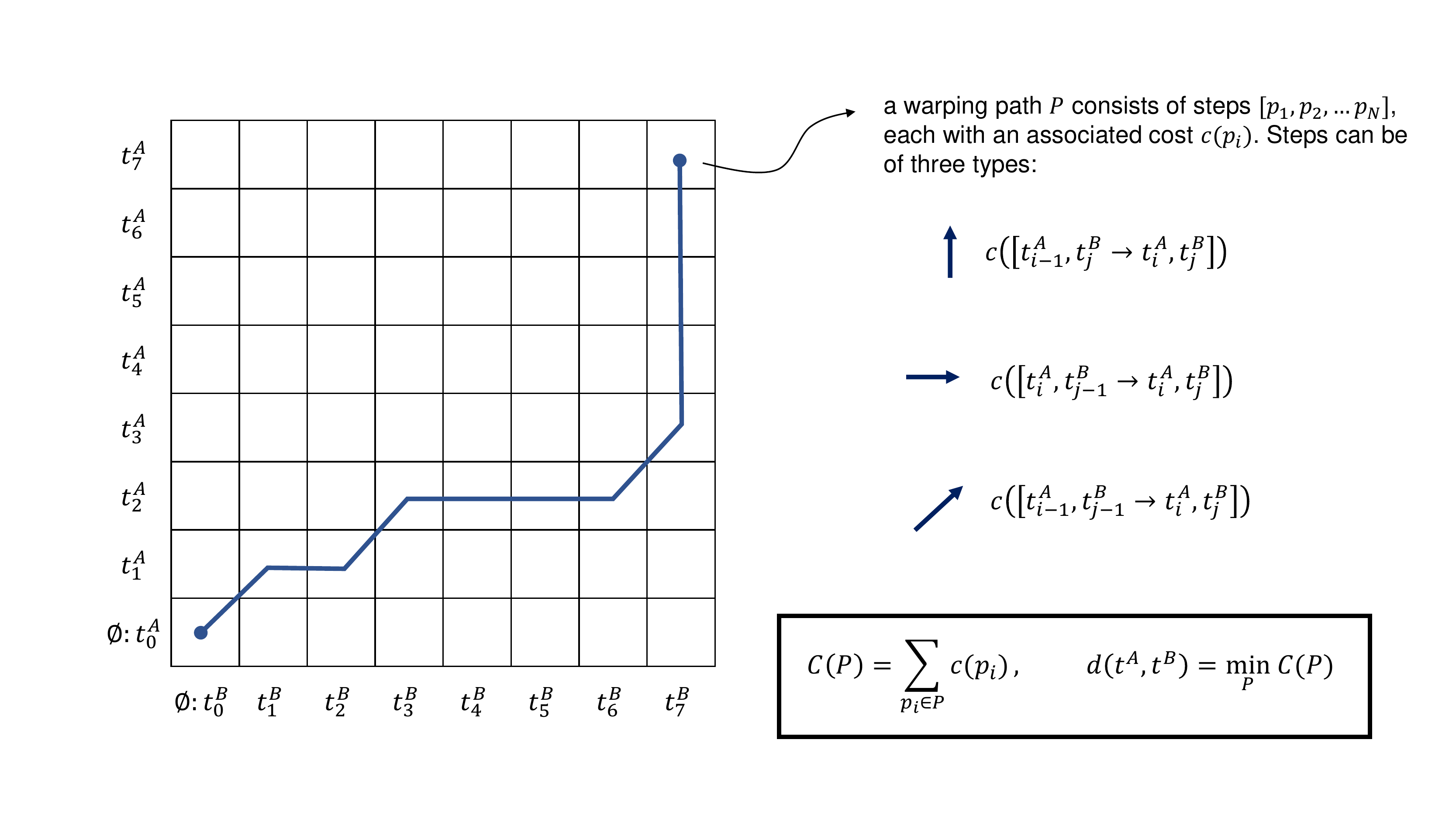}
\caption{\textbf{Illustration of Warping Distances.} Warping distances are defined based on an optimization over warping paths illustrated above.}
\label{fig:warping}
\end{figure}

\begin{figure}[H]
\centering
\includegraphics[width=0.85\linewidth]{libras-examples2.pdf}
\caption{\textbf{Illustration of Libras Dataset.} These trajectories that are used in Section \ref{subsection:downstream} are shown here. Each color represents a different class (although these labels are not available to the autowarp algorithm).}
\label{fig:libras_examples}
\end{figure}

\begin{figure}[H]
\centering
\includegraphics[width=0.85\linewidth]{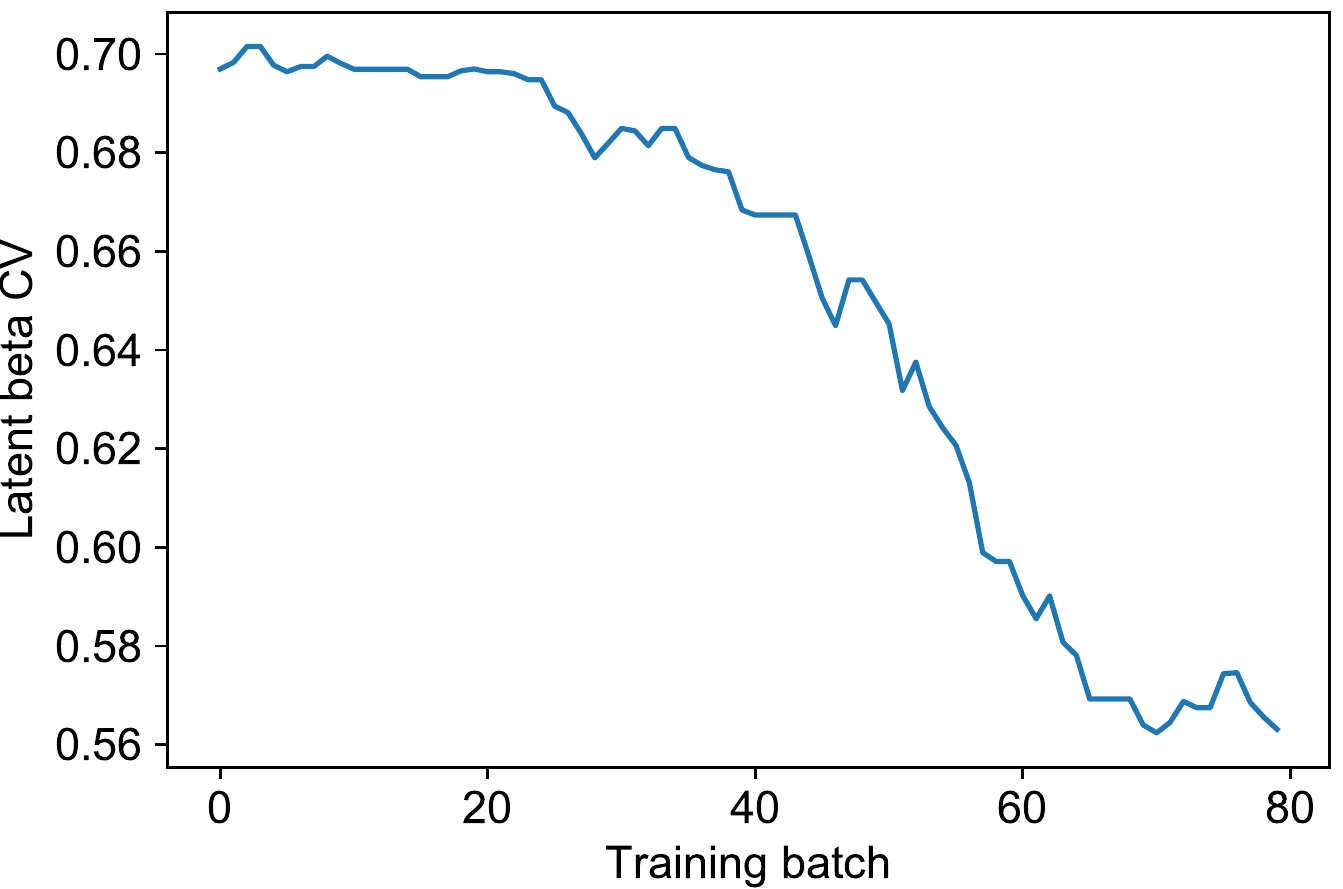}
\caption{\textbf{The Effect of Training on the Latent betaCV.} Here, we plot the latent betaCV of the distance learned by Autowarp as a function of the number of batches of examples seen during training. We find that the latent betaCV \textit{does} decrease showing that the quality of the autoencoder does have a measurable effect on the final learned Autowarp distance.}
\label{fig:training_latent_betacv}
\end{figure}

\begin{figure}[H]
\centering
\includegraphics[width=0.85\linewidth]{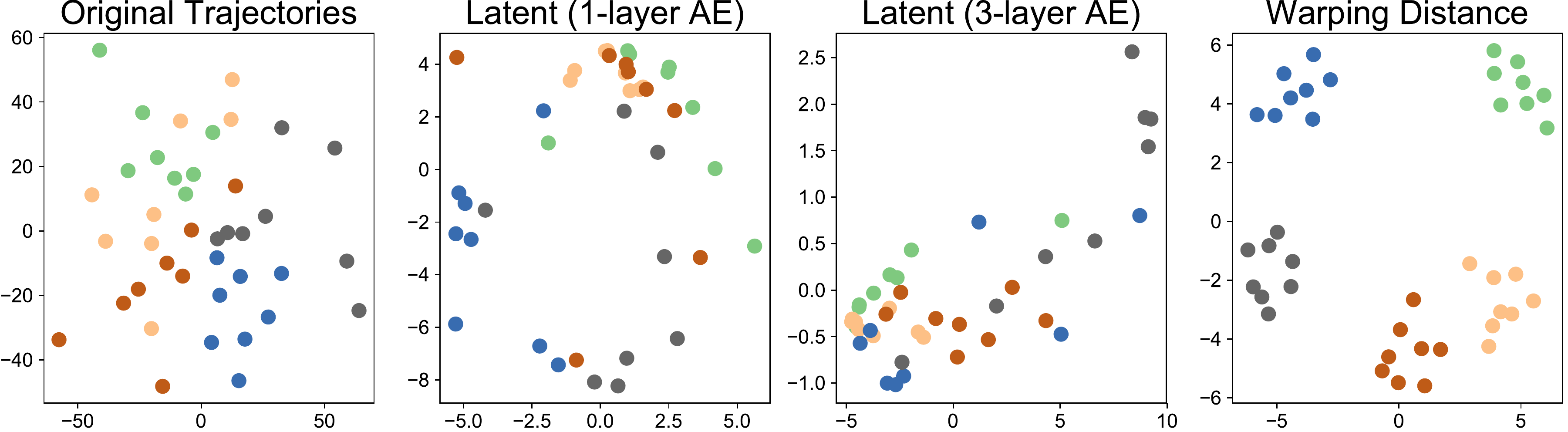}
\caption{\textbf{Effect of Increasing Autoencoder Complexity.} Here, we extend Fig. \ref{fig:mds} by including a more complex, 3-layer sequence-to-sequence autoencoder to investigate whether the more powerful autoencoder is capable of learning the latent representation on its own. We do not discern any visible differences by using the more complex autoencoder (shown in the 3rd panel from the left). The complete Autowarp algorithm (shown on the right, is able to determine the similarity between related trajectories).}
\label{fig:autoencoder_complexity}
\end{figure}

\begin{figure}[H]
\centering
\includegraphics[width=0.85\linewidth]{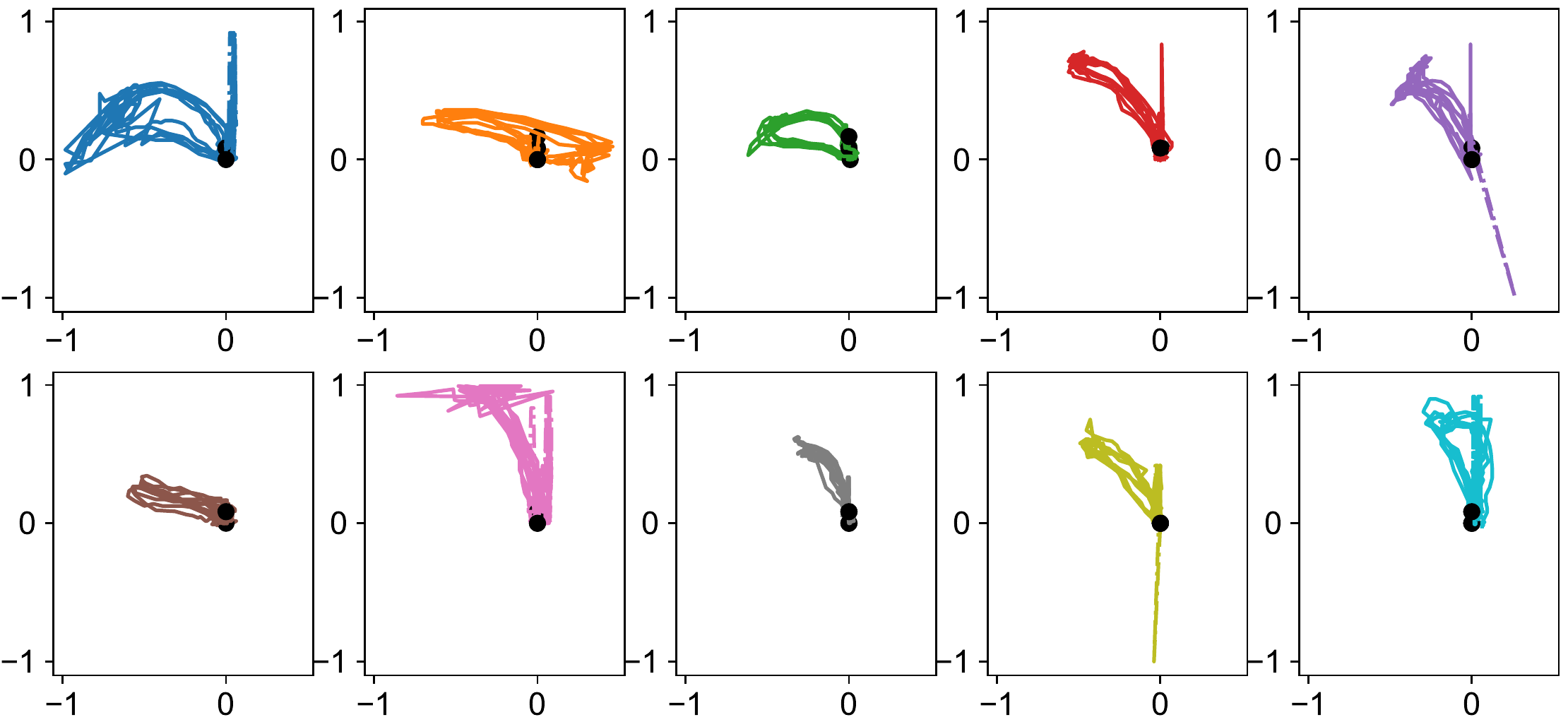}
\caption{\textbf{Illustration of ASL Dataset.} These trajectories that are used in Section \ref{section:experiments} are shown here. Each color represents a different class (although these labels are not available to the autowarp algorithm).}
\label{fig:asl_examples}
\end{figure}

\begin{figure}[H]
\centering

\subfloat[]{%
\includegraphics[width=0.9\linewidth]{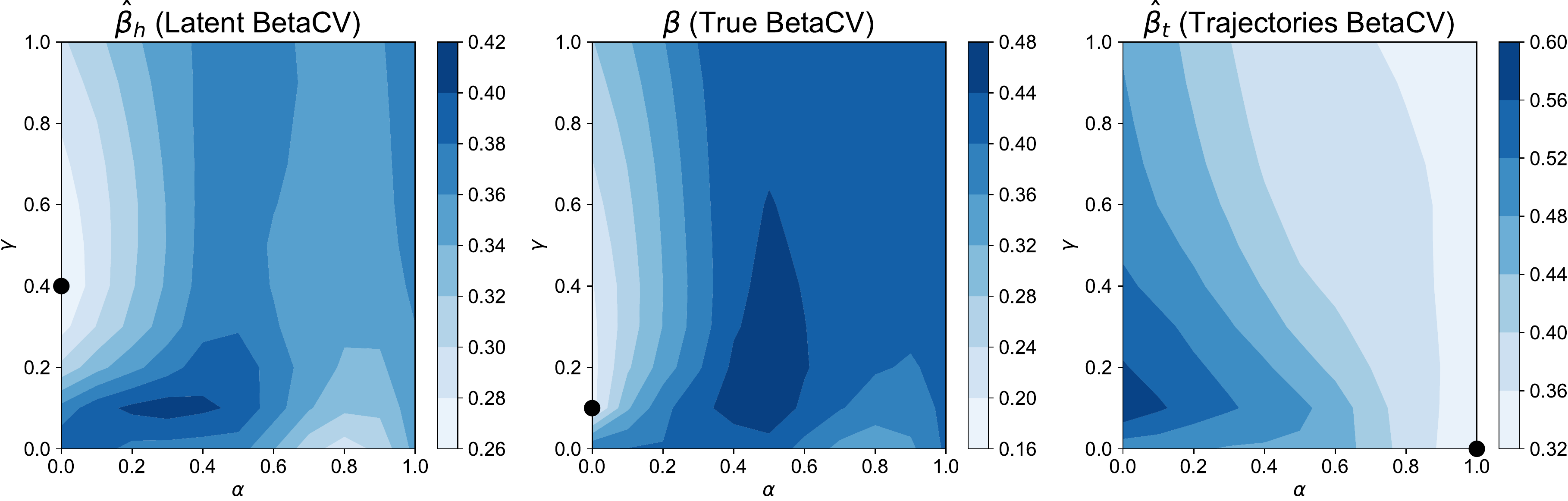}}

\subfloat[]{%
\includegraphics[width=0.9\linewidth]{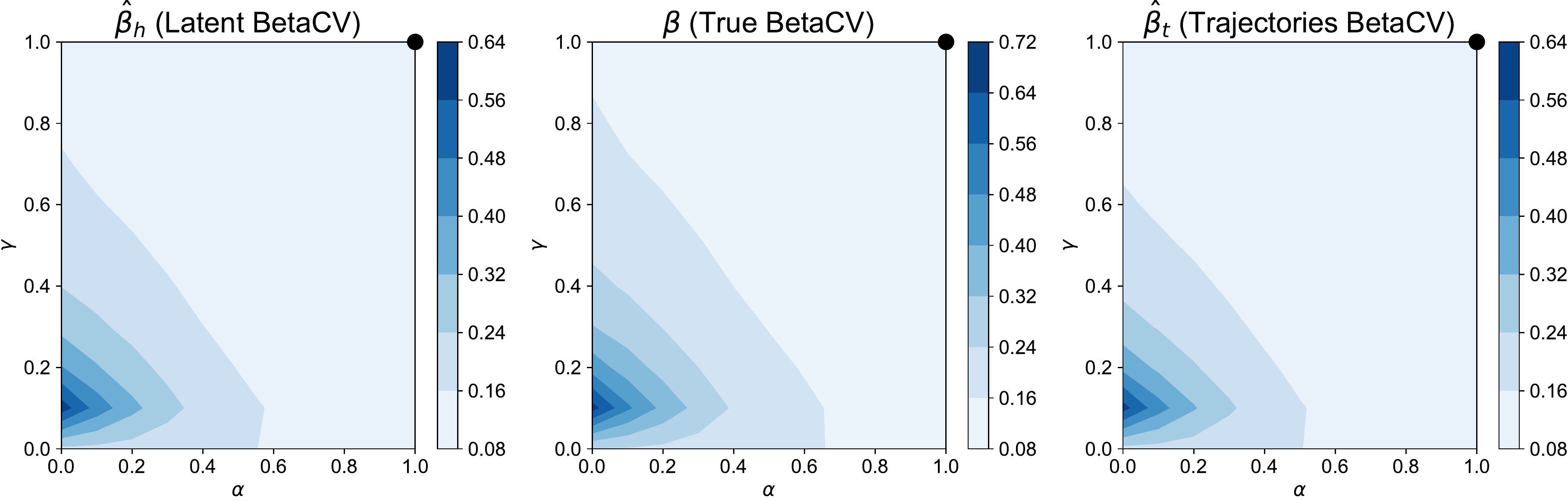}}

\subfloat[]{%
\includegraphics[width=0.9\linewidth]{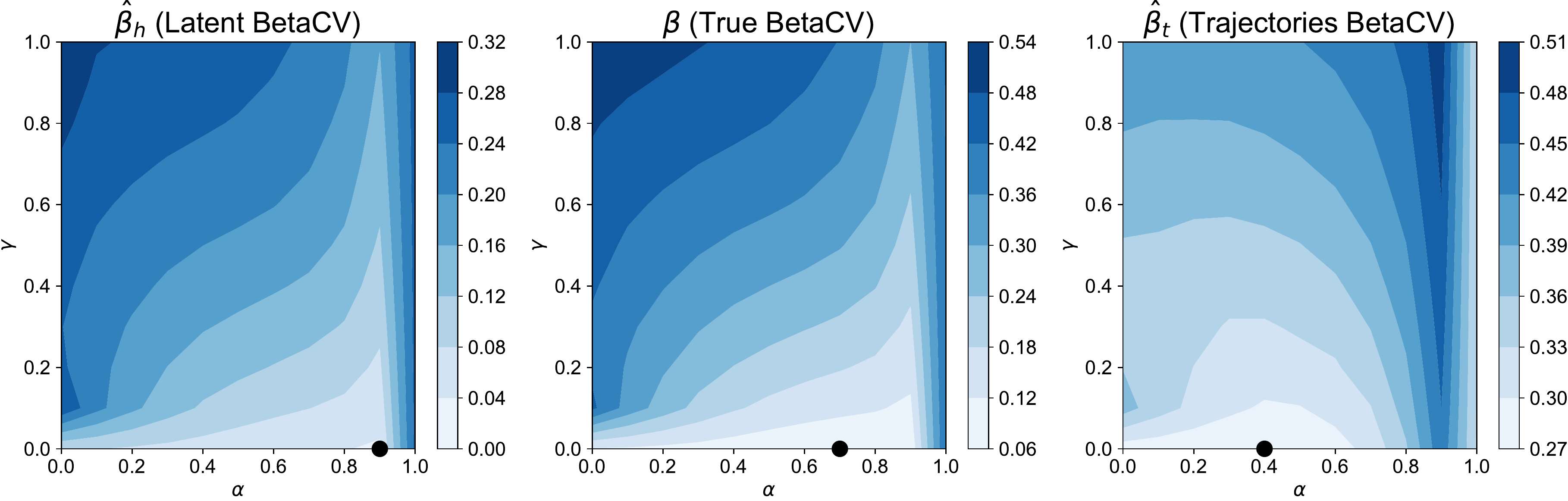}}

\subfloat[]{%
\includegraphics[width=0.9\linewidth]{gaussian_outliers_triple_plot3.pdf}}

\caption{\textbf{Plots of BetaCV.} Here, we show the plots of latent betaCV (left), true betaCV (middle), and a control in the form of betaCV computed on the original trajectory (right). Each row represents a different noise source. From top to bottom: the first row is the addition outliers, the second row is the addition of Gaussian noise, the third row is sampling noise, and the fourth is a combination of Gaussian noise and outliers.}
\label{fig:beta_cv_all_plots}
\end{figure}

\section{Autowarp Algorithm}
\label{app:algorithm}

\begin{algorithm}[H]
  \caption{Batched Autowarp}
  \label{alg:autowarp}
\begin{algorithmic}
  \STATE {\bfseries Inputs:} Set of trajectories $\mathcal{T} = (\tb^1, \ldots \tb^T)$, threshold percentile $p$, learning rate $r$, batch size $S$
    \STATE Use a sequence-to-sequence autoencoder trained to minimize the reconstruction loss of the trajectories to learn a latent representation $h_i$ for each trajectory $\tb^i$.
    \STATE Compute the pairwise Euclidean distance matrix between each pair of latent representations
    \STATE Compute the threshold distance $\delta$ defined as the $p^\text{th}$ percentile of the distribution of distances
    \STATE Initialize the parameters $\alpha, \gamma, \epsilon$ (e.g. randomly between 0 and 1)
    \WHILE{parameters have not converged}
        \STATE Sample $S$ pairs of trajectories with distance in the latent space $< \delta$ (denote the set of pairs as $\mathcal{P}_c$), and $S$ pairs of trajectories from all possible pairs (denote the set of pairs as $\mathcal{P}_{all}$).
        \STATE Define $d$ to be the warping distance parametrized by $\alpha, \gamma, \epsilon$
        \STATE Compute the $\hat{\beta}$ as follows: $ \hat{\beta} = {\sum_{\tb^i, \tb^j \in \mathcal{P}_c}} d(\tb^i, \tb^j) /{\sum_{\tb^i, \tb^j \in \mathcal{P}_{all}}} d(\tb^i, \tb^j)$
        \STATE Compute analytical gradients and update parameters: 
        $$\alpha \leftarrow \alpha - r \cdot d\hat{\beta}/d\alpha, \; \gamma \leftarrow \gamma - r \cdot d\hat{\beta}/d\gamma, \; \epsilon \leftarrow \epsilon - r \cdot d\hat{\beta}/d\epsilon$$
    \ENDWHILE
  \STATE {\bfseries Return:} final betaCV $\hat{\beta}$, and the optimal parameters $\alpha, \gamma, \epsilon$
\end{algorithmic}
\end{algorithm}

\section{Proofs}
\label{app:proofs}

\subsection{Proof of Proposition 1}

\begin{proposition}[\textbf{Robustness of Latent BetaCV}] Let $d$ be a trajectory distance defined over a set of trajectories $\mathcal{T}$ of cardinality $T$. Let $\beta(d)$ be the betaCV computed on the set of trajectories using the true cluster labels $\{C(i)\}$. Let $\hat{\beta}(d)$ be the betaCV computed on the set of trajectories using noisy cluster labels $\{\tilde{C}(i)\}$, which are generated by independently randomly reassigning each $C(i)$ with probability $p$. For a constant $K$ that depends on the distribution of the trajectories, the probability that the latent betaCV changes by more than $x$ beyond the expected $Kp$ is bounded by:
\begin{align}
\Pr(|\beta - \hat{\beta}| > Kp + x) \le e^{-2Tx^2/K^2} 
\end{align}
\end{proposition}

\begin{proof}
The mild distributional assumption mentioned in the proposition is to ensure that no cluster of trajectories is too small or far away from the other clusters. More specifically, let $d_\text{max}$ be the largest distance between any two trajectories, and let $\bar{d}$ be the average distance between all pairs of trajectories. We will define $C_1 \eqdef d_\text{max}/\bar{d}$. Furthermore, let $C_2$ be defined as the number of trajectories in the largest cluster divided by the number of trajectories in the smallest cluster. 

We will proceed with this proof in two steps: first, we will bound the probability that $|\hat{\beta} - E[\hat{\beta}]|>x$. Then, we will compute $|\beta - E[\hat{\beta}]|$. Then, by the triangle inequality, the desired result will follow. 

Consider the effect of single trajectory $\tb^i$ changing clusters from its original cluster to a new cluster. What effect does this have on the $\hat{\beta}$? At most, this will increase the distance between $\tb^i$ and other trajectories with the same label by $d_\text{max}$. By considering the number of such trajectories and normalizing the average distance between all pairs of trajectories, we see that the change in $\hat{\beta}$ is bounded to be less than $C_1 C_2 /T$.

Since have assumed that the probability that each of the $T$ cluster labels is independently reassigned, We apply McDiarmid's inequality to bound  $|\hat{\beta} - E[\hat{\beta}]|$. A straightforward application of McDiarmid's inequality shows that

$$ \Pr(|\hat{\beta} - E[\hat{\beta}]|>x) \le e^{\frac{-2Tx^2}{C_1^2C_2^2}}$$

Now, consider the difference $|\beta - E[\hat{\beta}]|$. As mentioned before, if a single trajectory $\tb^i$ changes clusters from its original cluster to a new cluster, the effect is bounded by $C_1 C_2 / n$. It is easy to see if subsequent trajectories change cluster assignment, the change in $\hat{\beta}$ will be of a smaller magnitude, as subsequent trajectories will face no larger of a penalty for switching cluster assignments (and in fact, may face a smaller one if they switch to a cluster that also has trajectories from the same original cluster). 

Thus, we see that the $|\beta - E[\hat{\beta}]| \le \frac{C_1 C_2}{T} (pT) = C_1 C_2 p $. Let us define $K \eqdef C_1 C_2$. Then applying the Triangle Inequality gives us:

$$ \Pr(|\beta - \hat{\beta}| > Kp + x) \le 4e^{-2Tx^2/K^2} $$

as desired.
\end{proof}

\subsection{Proof of Proposition 2}

\begin{proposition}[\textbf{Differentiability of Latent BetaCV}] Let $\mathcal{D}$ be a family of warping distances that share a cost function $c(\cdot)$, parametrized by $\theta$. If $c(\cdot)$ is differentiable w.r.t. $\theta$, then the latent betaCV computed on a set of trajectories is also differentiable w.r.t. $\theta$ almost everywhere.
\label{proposition:2}
\end{proposition}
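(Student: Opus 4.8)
The plan is to reduce the claim to two standard facts: first, that each individual pairwise warping distance $d(\tb^i,\tb^j)$ is locally Lipschitz as a function of the parameter vector $\theta$; and second, that local Lipschitz continuity survives the finitely many additions and the single division that assemble $\hat\beta$ out of these pairwise distances. Once $\hat\beta$ is seen to be locally Lipschitz on the parameter domain, Rademacher's theorem gives differentiability Lebesgue-almost everywhere, which is exactly the conclusion.

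First I would strip away everything in $\hat\beta$ that does not depend on $\theta$. The indicators $1[\|h_i-h_j\|_2<\delta]$, the normalization $Z$, and the factor $1/T^2$ are functions of the fixed latent representations and hyperparameters only, so $\hat\beta(\theta)=N(\theta)/M(\theta)$ where $N(\theta)=\frac{1}{Z}\sum_{i,j}d(\tb^i,\tb^j)\,1[\|h_i-h_j\|_2<\delta]$ and $M(\theta)=\frac{1}{T^2}\sum_{i,j}d(\tb^i,\tb^j)$ are fixed nonnegative linear combinations of the finitely many distances $\{d(\tb^i,\tb^j;\theta)\}$. Since $c(\cdot,\cdot;\theta)\ge 0$, the denominator satisfies $M(\theta)\ge 0$, and $M(\theta)>0$ unless all trajectories coincide; being continuous, $M$ is locally bounded below by a positive constant on the open set $\{M>0\}$. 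On that set $\hat\beta$ is a quotient of locally Lipschitz functions with denominator bounded away from zero, hence locally Lipschitz — provided the numerator and denominator are, which in turn reduces to showing each $d(\tb^i,\tb^j;\cdot)$ is locally Lipschitz.

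For a single pairwise distance I would use the representation $d(\tb^A,\tb^B;\theta)=\min_{\pb\in P}\sum_{k=1}^{L}c(p_{k-1},p_k;\theta)$ together with the key structural observation that the set $P$ of warping paths between trajectories of lengths $n$ and $m$ is \emph{finite} (monotone lattice paths on an $(n{+}1)\times(m{+}1)$ grid). For each fixed $\pb$ the map $g_{\pb}(\theta)=\sum_k c(p_{k-1},p_k;\theta)$ is a finite sum of copies of $c(\cdot,\cdot;\theta)$, which is differentiable — and for the explicit family of the paper even smooth, since $\sigma(r,\epsilon/(1-\epsilon))$, $\alpha/(1-\alpha)$ and $\gamma$ are all smooth in $(\alpha,\gamma,\epsilon)$ on $(0,1)^3$ — so each $g_\pb$ is $C^1$, hence locally Lipschitz. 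A pointwise minimum of finitely many locally Lipschitz functions is again locally Lipschitz, so $d(\tb^A,\tb^B;\cdot)=\min_{\pb\in P}g_\pb$ is locally Lipschitz; chaining this back through the previous paragraph and then applying Rademacher's theorem finishes the argument.

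The step I expect to be the real obstacle is the $\min$ over warping paths: at parameter values where two or more optimal paths tie, $d(\tb^A,\tb^B;\cdot)$ can genuinely fail to be differentiable, and a priori the set of such ties need not be negligible (two distinct paths could share the same cost function on a set of positive measure). The reason this does not defeat the proposition — and the reason the statement is ``almost everywhere'' rather than ``everywhere'' — is precisely that I do not try to characterize the tie set directly; passing through local Lipschitz continuity and Rademacher's theorem automatically certifies that the non-differentiability set has measure zero. (For the paper's $\tanh$-based family one can say more: each $g_\pb$ is real-analytic on $(0,1)^3$, so for $\pb\ne\pb'$ the set $\{g_\pb=g_{\pb'}\}$ is either all of the domain or a null set, and off a finite union of such proper null sets the minimizing path is locally unique and $\hat\beta$ is in fact smooth — but the Lipschitz route is what I would write for the general statement.)
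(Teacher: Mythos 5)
Your proof is correct and reaches the conclusion by a genuinely different route from the paper's. The paper first shows (its Lemma 1) that $d(\tb^A,\tb^B)$ is computed by a dynamic program assembled from finitely many sums and binary minimums, and then argues (its Lemma 2) that sums and pairwise minimums of continuous, a.e.-differentiable functions remain a.e. differentiable, finishing with the quotient. You instead work directly with the definition $d(\tb^A,\tb^B;\theta)=\min_{\pb\in P}g_\pb(\theta)$ over the finite (if exponentially large) set of warping paths, note that each $g_\pb$ is locally Lipschitz, that finite minima, nonnegative linear combinations, and a quotient with denominator locally bounded below all preserve local Lipschitzness, and then invoke Rademacher's theorem. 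Your route is cleaner precisely where the paper is weakest: the paper's Lemma 2 handles the tie set $\{f=g,\ \nabla f\ne\nabla g\}$ by asserting such points are ``isolated,'' which is not a rigorous argument in $\mathbb{R}^k$, whereas Rademacher certifies the null non-differentiability set without characterizing it. Two trade-offs are worth flagging. First, your argument needs $c(\cdot;\theta)$ to be locally Lipschitz (or $C^1$) in $\theta$, which holds for the paper's smooth $\tanh$-based family on $(0,1)^3$ but is not literally implied by the bare hypothesis ``differentiable'' in the proposition statement; a merely differentiable, non-Lipschitz cost would escape the Rademacher route (though not the conclusion, which can be recovered by your own tie-set analysis). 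Second, the paper's dynamic-programming decomposition is also what delivers the $O(N^2)$ per-pair and $O(T^2N^2)$ total runtime claims accompanying the proposition; your finite-min representation over exponentially many paths establishes differentiability but not efficiency. Your closing real-analyticity remark is a genuine strengthening for the specific family used: off a finite union of proper analytic null sets the minimizing path is locally unique and $\hat{\beta}$ is smooth there, not merely differentiable.
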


\begin{proof}
We will prove this theorem in the following way: first, we will show that any warping distance can be computed using dynamic programming using a finite sequence of only two kinds of operations: summations and minimums; then, we will show that these operations preserve differentiability except at most on a set of points of measure 0. Once, we have shown this, it is trivial to show that the latent betaCV, which is simply a summation and a quotient of such distances, is also differentiable almost everywhere.

\begin{lemma}
Let $d \in \mathcal{D}$ be a warping distance with a particular cost function $c(\cdot)$. Then, $d(\tb^A,\tb^B)$, the distance between trajectories $\tb^A$ and $\tb^B$, can be computed recursively with dynamic programming.
\end{lemma}
 
\begin{proof} 
We prove this by construction. Let $D(i,j)$ be defined in the following recursive manner: $D(0,0) = 0$,  and

$$D(i,j) = \min \left\{\begin{array}{lr}
        D(i-1,j) + c(\tb^A_{i-1},\tb^B_j,\tb^A_n,\tb^B_m) & n >= 1\\
        D(i,j-1) + c(\tb^A_{n},\tb^B_{m-1},\tb^A_n,\tb^B_m), & m >= 1\\
        D(i-1,j-1) + c(\tb^A_{n-1},\tb^B_{m-1},\tb^A_n,\tb^B_m) & n,m>=1\\
        \end{array}\right.
$$

We will show that $D(n,m)$, where $n$ is the length of $\tb^A$ and $m$ is the length of $\tb^B$, is the minimal cost evaluated across all warping paths that begin at $(\tb^A_0,\tb^B_0)$ and end at $(\tb^A_n,\tb^B_m)$. By the definition of a warping distance, it follows that $d(\tb^A,\tb^B) = D(n,m)$. 

It is obvious that if both trajectories are of zero length, then $D(0,0)=d(\tb^A,\tb^B)=0$ as desired. For clarity, let us consider the base cases $n=0,m>0$ and $n>0,m=0$. For the former, clearly, there is only one warping path from $(\tb^A_0,\tb^B_0)$ to $(\tb^A_0,\tb^B_m)$. The cost evaluated over this path is:
$$0 +  c((\tb^A_0, \tb^B_0), (\tb^A_0, \tb^B_1)) + \ldots + c((\tb^A_0, \tb^B_{m-2}), (\tb^A_0, \tb^B_{m-1})) + c((\tb^A_0, \tb^B_{m-1}), (\tb^A_0, \tb^B_m)))$$ 
$$ = D(0,m-1) + c((\tb^A_0,\tb^B_{m-1}),(\tb^A_0,\tb^B_m))$$
as desired. The expression for the base cases $m=0,n>0$ is similarly derived. 

Now, consider $m>0,n>0$, and let $p^* = [p_1 \ldots p_L]$ be an optimal warping path for $(A_n,B_m)$, i.e. one with a minimal cost. Note that $p_{L-1}$ must be one of 
$$\{(\tb^A_{n-1},\tb^B_m), (\tb^A_{n},\tb^B_{m-1}), (\tb^A_{n-1},\tb^B_{m-1})\}$$
and furthermore, we claim that it must be that an optimal path to $p_{L-1}$, denote it as $q^*$, be exactly $[p_1 \ldots p_{L-1}]$. Otherwise, we could replace $p_1 \ldots p_{L-1}$ in $p^*$ with $q^*$ and get a lower-cost optimal path, because addition is monotonically increasing. As a result, this means that we can compute the optimal warping path to $(\tb^A_n,\tb^B_m)$, we need to only consider the optimal paths that go through the three pairs listed above, and choose the one with the smallest total cost. This is what the recursive equation computes, showing that $D(n,m)$ is the minimal cost evaluated across all warping paths that begin at $(\tb^A_0,\tb^B_0)$ and end at $(\tb^A_n,\tb^B_m)$.
\end{proof}

Now, we return to the proof of the proposition. Using Lemma 1, we see that the distance between two trajectories can be computed using a finite sequence of only two kinds of operations: summations and minimums. This also shows that the distance between two trajectories can be computed in $O(mn)$ time. We now turn to Lemma 2, which will show that these operations preserve differentiability at all except a countable number of points.

\begin{lemma}
Let $f$ and $g$ be two continuous functions from $R^k \to R$ that are differentiable almost everywhere. Then $\min(f,g)$ and $f+g$ are also differentiable almost everywhere.
\end{lemma}

\begin{proof} 
Let $E_1$ be the set of points that $f$ is \textit{not} differentiable on, and let $E_2$ be the set of points that $g$ is \textit{not} differentiable on. Let $E$ be the set of points that $\min(f,g)$ is not differentiable on. Now, consider a point $\theta$ such that $f(\theta) \ne g(\theta)$. Without loss of generality, we can consider that $f(\theta) > g(\theta)$. Then, by continuity, $f > g$ in a neighborhood around $\theta$, and so $\min(f,g)=g$, and the derivative of $\min(f,g)$ is simply $g'$. 

On the other hand, if $f(\theta)=g(\theta)$, consider the derivatives $d f/d \theta_i$ and $d g/d \theta_i$. If the derivatives are all identical, then clearly the gradient of the minimum exists at $\theta$: it is simply the gradient of $f$ or $g$. If any of the derivatives, say $d g/d \theta_j$ is unequal, then the gradient of $\min(f,g)$ will not exist at $\theta$, but then without loss of generality, we can say that $d f/d \theta_j > d g/d \theta_j$. In that case, there exists a neighborhood around theta in the direction of $+\theta_j$ where $f > g$ (and so the derivative of $\min(f,g)$ there is simply $g'$), and similarly, the neighborhood in the direction of $-\theta_j$ where $f < g$ (so the derivative of $\min(f,g)$ there is simply $f'$). Thus, we see that $\theta$ is an isolated point, and the set $E$ of all such $\theta$ is of measure 0.

The differentiability of $f+g$ is much easier to consider: the set of all points where $f+g$ is not differentiable is at most the union of $E_1$ and $E_2$, each are of which measure 0, so the result follows.
\end{proof}

It can be seen by repeatedly applying Lemma 2 that the computation of a warping distance between trajectories is differentiable. Now note that the betaCV is defined simply as the quotient of a sum of distances. We have already shown that the sum of differentiable functions is differentiable. The same can be shown for the quotient of functions, as long as the divisor is non-zero. Because the warping distance is greater than 0 as long as the trajectories are distinct, the desired result follows for any set of distinct trajectories. Finally, let us note that if there are $T$ trajectories of length $N$ in our dataset, the total runtime to compute the betaCV is $O(N^2T^2)$.

\end{proof}

\end{appendices}

\end{document}